\documentclass{article}
\usepackage{times}
\usepackage[letterpaper,textwidth=5.5in,textheight=9in,centering]{geometry}
\usepackage{natbib}

\usepackage[utf8]{inputenc}
\usepackage[dvipsnames]{xcolor}
\usepackage{graphicx}
\usepackage{verbatim}
\usepackage{subcaption}
\usepackage{amsmath}
\usepackage{amsfonts}
\usepackage{amssymb}
\usepackage{nicefrac}
\usepackage{microtype}
\usepackage{mathtools}
\usepackage{booktabs}
\usepackage{xspace}
\usepackage{url}
\usepackage{amsthm}
\usepackage[colorlinks=true, citecolor=PineGreen, linkcolor=Sepia, urlcolor=MidnightBlue]{hyperref}
\hypersetup{
  pdftitle={On Learning Expand-and-Sparsify Representations},
  pdfauthor={Alireza Asghari, Parikshit Ram, Kaushik Sinha},
  pdfsubject={},
  pdfkeywords={}
}
\usepackage{cleveref}

\usepackage{enumitem}
\setlist[itemize]{noitemsep, topsep=0pt, leftmargin=*}
\setlist[enumerate]{noitemsep, topsep=0pt, leftmargin=*}

\usepackage{wrapfig}
\usepackage{placeins}

\theoremstyle{plain}
\newtheorem{theorem}{Theorem}[section]

\newtheorem{lemma}[theorem]{Lemma}

\theoremstyle{definition}

\theoremstyle{remark}

\usepackage{amsmath,amsfonts,bm}

\def\eqref#1{equation~\ref{#1}}

\def\1{\bm{1}}

\def\rvw{{\mathbf{w}}}
\def\rvx{{\mathbf{x}}}
\def\rvy{{\mathbf{y}}}
\def\rvz{{\mathbf{z}}}

\def\rmM{{\mathbf{M}}}

\def\rmW{{\mathbf{W}}}
\def\rmX{{\mathbf{X}}}
\def\rmY{{\mathbf{Y}}}

\def\mI{{\bm{I}}}

\DeclareMathAlphabet{\mathsfit}{\encodingdefault}{\sfdefault}{m}{sl}
\SetMathAlphabet{\mathsfit}{bold}{\encodingdefault}{\sfdefault}{bx}{n}

\newcommand{\E}{\mathbb{E}}

\newcommand{\R}{\mathbb{R}}

\DeclareMathOperator*{\argmin}{arg\,min}

\def\E{{\mathbb{E}}}
\def\I{{\mathbb{I}}}

\def\R{{\mathbb{R}}}
\def\Fcal{{\mathcal{F}}}
\def\Hcal{{\mathcal{H}}}
\def\Xcal{{\mathcal{X}}}
\def\Rcal{{\mathcal{R}}}

\def\relu{{\texttt{ReLU}}}
\def\topk{{\texttt{topk}}}
\def\lwta{{\texttt{LWTA@k}}}

\def\eas{{\texttt{EaS}}\xspace}
\def\bce{{\texttt{bce}}\xspace}
\def\logit{{\texttt{logistic}}\xspace}
\newcommand{\pr}{{\mathop {\rm Pr}}}

\title{On Learning Expand-and-Sparsify Representations}

\author{
  Alireza Asghari\\ Wichita State University\\ \texttt{axasghari@shockers.wichita.edu}
  \and
  Parikshit Ram\\ IBM Research\\ \texttt{parikshit.ram@ibm.com}
  \and
  Kaushik Sinha\\ Wichita State University\\ \texttt{kaushik.sinha@wichita.edu}
}
\date{}

\begin{document}

\maketitle

\begin{abstract}
Expand-and-sparsify (\eas) representations lift an input to a much higher
dimension through a linear map and keep only the $k$ largest coordinates.
Existing theory builds that map at random and needs the expanded dimension to
grow exponentially in the input dimension, which puts the construction out of
practical reach. We ask what is gained by learning the map by gradient descent
instead, and how that gain depends on how wide the expansion is. We sweep $73$
OpenML classification tasks and $48$ regression tasks under a protocol that
holds the datasets, expanded dimensions, sparsity levels and splits fixed
across every method, and we summarise each comparison by a median over datasets
and a Wilcoxon signed rank test. We report four findings. First, learning the
expansion helps most when the expansion is narrowest, and its advantage over a
random map decays to nothing by the widest expansions we run, so a learned
\eas layer can be made small. Second, with the map learned, a top-$k$
representation that keeps at most a tenth of its coordinates is
indistinguishable from a dense $\relu$ layer of the same expanded dimension at
every expansion factor we test, while GELU and SiLU are not. Third, when the
representation is forced to be binary the sparse code is the better of the two,
and the reason is that binarization costs it nothing while it costs the dense
layer accuracy. Fourth, how sparse the representation can be is governed by how
wide it is: the sparsity level that is free rises as the expansion grows, so
there is no fixed limit on how sparse it can be. The regression tasks reproduce
the first finding and qualify the second and third.
\end{abstract}

\section{Introduction}

{\em Expand-and-sparsify} or \eas representations are generated by projecting inputs in $\R^d$ to a much larger dimensionality $\R^m$ with $m \gg d$, followed by a sparsification step which zeros out all but the top $k$ values in the $m$-dimensional representation, with $k \ll m$. This scheme is neuro-biologically inspired, where the top-$k$ sparsification is achieved via inhibitory signals common in biological systems. For example, a version of this is seen in the olfactory circuit of fruit-flies \citep{turner2008, caron2013}, olfactory circuit of rodents, the visual cortex of cats, and the electrosensory system of electric fish~\citep{stettler2009,olshausen2004,chacron2011}.
 This representation scheme has been used for locality sensitive hashing~\citep{dasgupta2017flyhash} and outlier detection~\citep{dasgupta2018novelty}. Subsequent theoretical studies have shown that a linear model on top of these representations can approximately express any sufficiently smooth function~\citep{dasguptatosh2020, sinha2024nonparametric}.

While these representations come with interesting theoretical properties, their practical viability has not been established to the best of our knowledge. The theoretical studies rely on expansions via prescribed constructions, and require $m \sim O(\exp(d))$, thus limiting practical use. Learned expansions for supervised problems via stochastic gradient descent (SGD) and backpropagation with more practical values of $m \gg d$ have not been explored. Expanded representations (with more practical values of $m \geq d$) are often employed in the single-hidden-layer feedforward networks, serving as {\em channel mixers} in architectures such as the Transformer~\citep{vaswani2017attention} or the MLP-mixer~\citep{tolstikhin2021mlp}, and we think it is important to understand the capabilities of (learned) \eas representations relative to the commonly used learned unrestricted expanded representations.

Consequently, we believe that \eas can enable unique advantages in practical
architectures. Because the expanded representation is extremely ($k \ll m$) and
predictably (always $k$ nonzeros) sparse, (i)~we can use sparse linear algebra
kernels when available, (ii)~we need only load the rows of the next
layer's weights that correspond to the $k$ nonzeros, (iii)~we need only
backpropagate through those same $k$ nodes \citep{gupta2021memory}, and (iv)~any
cache of intermediate activations \citep{grave2017improving,wimbauer2024_cache}
shrinks by the same factor. \Cref{app:sparse-advantages} works through what
each of these is worth.

\paragraph{Research question.}
Given these motivations for \eas representations, here we focus on understanding whether learning the \eas representations via standard SGD (i)~not only improves over the existing theoretical \eas constructions, but
also (ii)~produces representations that are competitive with existing
unrestricted expanded representations in terms of expressivity and
generalization.

\paragraph{Findings.}
We consider an evaluation across $73$ OpenML classification tasks (binary and multi-class) and $48$ OpenML regression tasks, controlling for the
expanded dimension (with the expansion factor $m/d \in [1, 256]$) and the
sparsity ($k \leq 0.1m$),
training over roughly $50{,}000$ configurations (more than $160{,}000$ training
runs across three train/validation/test splits for each configuration).
\footnote{
Note that the existing theory of \eas
representations, either in regression for arbitrarily accurate
function approximation~\citep{dasguptatosh2020} or in classification for a minimax-optimal convergence rate~\citep{sinha2024nonparametric},
requires $m \sim O(\exp(d))$, and in the classification setting the existing
lower bound on $k$ is $\Omega(d \log m)$. In our setting $m$ is not scaled
exponentially in $d$, so we instead work in an artificially restricted
high-sparsity regime of $k \leq 0.1m$. See \cref{app:apprx-gen-theory} for the theoretical guarantees achievable in this regime of $(m,k)$.
}
Every number below is a median over
datasets of the per-dataset difference, with significance from a two-sided
Wilcoxon signed rank test. We make the following observations (see summary in \cref{fig:summary}).

\begin{figure}[t]
\centering
\includegraphics[width=\linewidth]{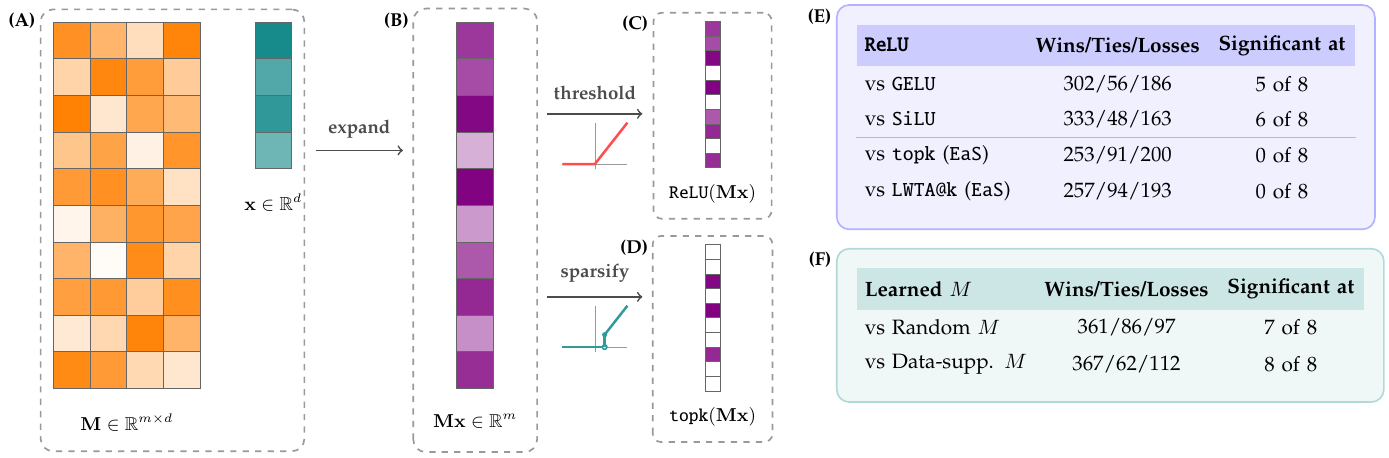}
\vskip -8pt
\caption{{\bf What we compare, and what we find.}
{\bf (A)}~A linear operator $\rmM \in \R^{m \times d}$ is applied to
the input $\rvx \in \R^d$, giving us
{\bf (B)}~an expanded representation
$\rmM \rvx \in \R^m$,
which is then either {\bf (C)}~thresholded, giving a
dense $\relu$ layer, or
{\bf (D)}~sparsified, giving the \eas representation.
{\bf (E)}~The table compares $\relu$ against
alternate activations
with $\rmM$ learned throughout
--- \texttt{GELU} and \texttt{SiLU} are other dense activations, and $\topk$ and $\lwta$ serve as \eas versions with different sparsification schemes.
{\bf (F)}~The table compares a learned $\rmM$ against  theoretical
constructions from \citet{dasguptatosh2020} with $\topk$ activation throughout.
We summarize the results in the form of
wins/ties/losses over the
(dataset, $m/d$) configurations,
and ``significant at'' counts how many
of the 8 expansion factors reach $p < 0.05$ under a two-sided Wilcoxon
signed rank test
(per-expansion factor results are in \cref{fig:boxes} and \cref{tab:wtl};
more details are presented in \cref{app:tables} and \cref{app:counts}).
The results show that the learned \eas representations not only {\em significantly improve upon the theoretical constructions}, but also provide {\em competitive performance to standard activations used in feedforward networks}.
}
\label{fig:summary}
\end{figure}

\begin{itemize}
\item Learned \eas representations outperform the theoretical constructions,
and by the largest margin where the expansion is narrowest. That learning helps
is expected, since only it sees the labels; the useful part is the decay.
Against a data-independent construction the median gain falls from $3.0$
accuracy points at $m = d$ to $0.13$ at $m = 256d$, significant up to
$m = 128d$ ($p \leq 0.006$) but not beyond. Regression behaves the same way, with a
median gain of $0.185$ in $R^2$ at $m = d$.
Against the
data-supported construction the gain does \emph{not} decay: it stays between
$0.65$ and $1.68$ points across the whole range and is significant at every
expansion factor ($p \leq 0.0011$). A learned \eas representation can therefore be made
small, tolerating smaller values of $m \geq d$.

\item Learned \eas representations at $k \leq 0.1m$ are competitive with
unrestricted ones. Against a $\relu$ layer of the same expanded dimension we
see no significant difference at \emph{any} expansion factor, with the median difference
between $-0.22$ and $+0.05$ points and $p \geq 0.177$ throughout. This is not
something every alternative achieves: \texttt{GELU} and \texttt{SiLU} both fall significantly
short of $\relu$ over most of the range. On regression the same parity holds
for $m \geq 4d$; below that the \eas representation trails $\relu$
significantly, by a median $0.04$ in $R^2$ at $m = d$ and $0.03$ at $m = 2d$.

\item If the representation is restricted to be binary, the learned \eas
representation is the better of the two from $m = 16d$ onward, by $0.50$ to
$0.62$ accuracy points. Comparing each activation against its own
value-passing form shows why: binarizing $\topk$ costs nothing at any expansion
factor, while binarizing $\relu$ costs $0.4$ points at $m \geq 128d$. The
sparse code is not gaining, it is simply not losing, since it already carries
its information in which coordinates fire rather than how strongly. On
regression binarizing costs every activation something, but it costs $\topk$
less than $\relu$, so the binarized \eas representation is again ahead, at
$m = 16d$ to $128d$.

\item How sparse the representation can be is governed by how large the
expanded dimension is. At $90\%$ sparsity it matches $\relu$ everywhere; at
$99\%$ it falls behind, but the deficit shrinks monotonically from $2.4$ points
at $m = 4d$ to $0.35$ at $m = 256d$. There is no single sparsity level at which
\eas begins to lose; instead, the larger the expanded dimension, the sparser the
representation can be before it costs accuracy.
\end{itemize}

Together these indicate that the practical benefits of \eas representations
described above may be available without losing much in expressivity or
generalization, which makes a case for their wider use in modern
architectures.

\paragraph{Related work.}
\eas representations were introduced as a model of the fly olfactory circuit
and used for similarity search \citep{dasgupta2017flyhash} and novelty
detection \citep{dasgupta2018novelty}, and later applied to supervised
classification \citep{sinha2021fbfc, ram2022flynn}. Their expressivity has been
studied theoretically by \citet{dasguptatosh2020}, \citet{sinha2024nonparametric}
and \citet{sinhatosh2026density}, all of whom construct the expansion by
sampling its rows from a fixed distribution rather than by learning them. Two
lines of work do learn the projection: \citet{ryali2020bio} train it with an
unsupervised, biologically motivated rule for similarity search, and
\citet{liang2021can} train the same circuit on text to produce word embeddings.
Both learn the expansion, but neither measures the resulting representation
against a dense layer of the same expanded dimension under a fixed sparsity
budget, which is the comparison we make. Sparsification by competition among
units, without the expansion, was studied by \citet{srivastava2013compete}, and
top-$k$ activations have been used inside the feedforward block of a
Transformer to reduce the memory cost of backpropagation
\citep{gupta2021memory}. We further elaborate on how our setting relates to each of these in
\cref{sec:related}, including the sense in which \citet[\S5]{dasguptatosh2020}
already consider a data-dependent expansion.

\section{Problem and Experimental Setup}
\label{sec:expsetup}

\paragraph{The model.}
We consider supervised learning problems, given training data
$S = \{ (\rvx_i, \rvy_i), i = 1, \ldots, n \}$ with input features $\rvx_i \in \R^d$ and the
output $\rvy_i \in \mathcal{Y} \subset \R^l$, and a model of the form
\begin{equation}
\label{eq:model}
f(\rvx; \rmW, \rmM) = \rmW \rvz,\quad \rvz \triangleq \sigma( \rmM \rvx ) \in \R^m, \quad
\rmM \in \R^{m \times d}, \rmW \in \R^{l \times m}.
\end{equation}
Here $\rmM$ is the expansion matrix, $\sigma$ is a nonlinear activation, $\rmW$ is
the linear readout, and $(\rmW, \rmM)$ are the learnable parameters. If $l$ is the
number of classes, $f$ is a multi-class classifier; if $l = d$, this model is
the feedforward channel mixer used in the Transformer and the MLP-Mixer. With
$\sigma$ a standard activation such as $\relu$, \cref{eq:model} is a
single-hidden-layer feedforward network; with $\sigma$ the $\topk$ function we
get $\|\rvz\|_0 = k$, which is the \eas representation. We can further restrict
the representation to be binary, $\rvz \in \{0,1\}^m$, in which case a standard
activation reduces to the Heaviside function while the \eas representation has
exactly $k$ ones and $(m - k)$ zeros.
Given a loss $\ell$, we learn $(\rmW, \rmM)$ by minimizing the population loss
$\sum_{i=1}^n \ell(\rvy_i, \rmW \cdot \sigma(\rmM \rvx_i))$ with SGD, for different
activations and different values of $m$ and $k$. With theoretical
constructions of $\rmM$, we learn only $\rmW$.

\begin{wrapfigure}[16]{R}{0.5\textwidth}
\vskip -12pt
\centering
\includegraphics[width=0.5\textwidth]{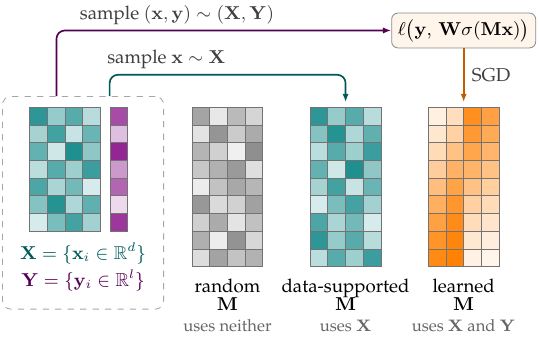}
\vskip -8pt
\caption{The three constructions of $\rmM$ and what each may use. The random
(data-independent) $\rmM$ uses neither $\rmX$ nor $\rmY$, the data-supported
$\rmM$ samples its rows from $\rmX$, and the learned $\rmM$ is trained by SGD
on a loss that sees both.}
\label{fig:mconst}
\end{wrapfigure}
\paragraph{Constructions of the expansion matrix.}
We compare three ways of obtaining the expansion matrix $\rmM$
(\cref{fig:mconst}), holding everything else fixed. In the \emph{learned} construction, $\rmM$ is trained by SGD jointly
with the readout $\rmW$. In the \emph{data-supported} construction, the $m$ rows
of $\rmM$ are drawn from the training inputs with replacement, so that the
expansion directions lie in the support of the data, which is the setting
\citet[\S5]{dasguptatosh2020} analyze. In the \emph{data-independent}
construction, the rows are drawn i.i.d.\ from a fixed distribution (the uniform distribution on the unit sphere in $\R^d$, drawn by normalizing samples from $\mathcal N(0, \mI_d)$). In all
three cases, $\rmW$ is fitted by SGD on all $n$ training points.

Drawing with replacement matters for two reasons. It keeps the construction
defined for every dataset at every expanded dimension, including $m > n$, so
the three constructions are compared on the same datasets throughout. And it
is i.i.d.\ sampling from the empirical distribution, whereas drawing $m$
distinct points is not i.i.d.\ at all. The cost is that $M$ can hold at most
$n$ distinct rows, which \cref{sec:res-projection} shows is what eventually
limits this construction.

\paragraph{Activations.}
The sparse activations are $\topk$ and local winner-take-all ($\lwta$), which splits the representation into blocks of size $(m/k)$ and selects the top-1 in each block~\citep{srivastava2013compete}, resulting in one nonzero per block; with blocks of size $\mathrm{round}(m/k)$ that is exactly $k$ nonzeros when $k$ divides $m$ and can be slightly more otherwise (\cref{app:lwta}). We treat $\lwta$ as an alternative
sparsification scheme within \eas rather than as a separate method: both keep
(about) $k$ of the $m$ coordinates and so carry the same sparsity budget, and
they differ only in which $k$ survive.
The $\topk$ activation lets all $m$ units compete,
while $\lwta$ lets them compete only inside blocks of size $(m/k)$. For any single
input the two coincide under a permutation of the rows of $\rmM$ that puts the $k$
winners in distinct blocks, and when $\rmM$ is learned the assignment of
directions to blocks is learned along with the directions, so the network can
partly arrange that for itself. We therefore expect the two to behave
similarly, and \cref{sec:res-vs-relu} finds that they do. We run each of them
in a
value-passing form, which forwards the surviving pre-activations, and in a
binary form, which forwards their indicator and is trained through a
straight-through estimator. The dense baselines are $\relu$, \texttt{GELU} and \texttt{SiLU}.
Binarizing any of these three gives the same function, the indicator of a
positive pre-activation, so there is one dense binary baseline rather than
three and we run only that one. The binary forms are evaluated against each
other in \cref{sec:res-binary} and are not mixed into the comparisons among
value-passing activations.

\paragraph{Parameterization.}
We report results as functions of the expansion factor $m/d$ and the sparsity
level $k/m$ rather than of $m$ and $k$. These are equivalent, but the ratios
are comparable across datasets of differing dimension, and $k$ on its own
cannot be separated from $m$ in our grid: since $k = (k/m)\cdot(m/d)\cdot d$, a
large absolute $k$ arises only at a large expanded dimension. We sweep
$m/d \in \{1, 2, 4, 8, 16, 64, 128, 256\}$ and
$k/m \in \{0.001, 0.01, 0.03, 0.05, 0.10\}$, restricting throughout to
$k \leq 0.1m$.

\paragraph{Data.}
We use $73$ classification tasks from OpenML, admitted by input dimension
$d \in [8, 120]$ and sample size $n \in [300, 20000]$, with missing values
imputed and categorical features one-hot encoded. Of these, $37$ are binary
and $36$ are multi-class. We use a further $48$ regression tasks admitted by
the same criteria. We deliberately chose a large and varied set of tasks so that we can see how
generally the behaviour of \eas holds.

\paragraph{Protocol.}
Every construction sees identical datasets, expanded dimensions, sparsity
levels and train/test splits, so all comparisons are paired. We repeat every
configuration on $3$ random train/\allowbreak validation/\allowbreak test splits ($70/10/20$,
stratified for classification), with a different seed for each, and report the
mean test score. We train with Adam at a learning rate of
$10^{-3}$, no weight decay and a batch size of $256$, for up to $300$ epochs
with early stopping at a patience of $30$. For each (dataset, $m/d$,
activation, construction) configuration we select $k$ on validation accuracy
and report the corresponding test accuracy; the dense activations have no $k$
and are unaffected by this selection. We cap $m \leq 30{,}000$ due to computational limitations.

Two things follow from that cap and from $k$ being an integer, and both change
which datasets are available at a given expansion factor. A dataset drops out
at the wide end when $m = (m/d)\cdot d$ exceeds the cap, which happens first
for the highest-dimensional datasets. And it drops out at the narrow end when
$m$ is too small for any integer $k$ to satisfy $k \leq 0.1m$, which happens
first for the lowest-dimensional ones. The pool is therefore not the same at
every $m/d$, so we compare methods only within an expansion factor and never
read a trend along a row of counts. \Cref{app:counts} gives the pool sizes at each
expansion factor and which of the two causes is responsible.

\paragraph{Statistics.}
For each expanded dimension, we compare two methods only on the datasets where
both are available, so every comparison is paired. Within a given $m/d$ we use
the same set of datasets for every method being compared, so that a method is
never scored on an easier subset than the one it is being compared against.
We report raw metrics (accuracy $\in [0,1]$ for classification, and $R^2$ for regression), except in \cref{fig:scaled} where we use the scaled form of the metrics to aggregate performance across datasets~\citep{ram2022flynn}.
Because $R^2$ is unbounded below,
we verify every regression conclusion twice, once with $R^2$ floored at zero
and once left raw, and we say so wherever the two disagree. In both cases
$k$ is chosen on the raw validation score, so flooring changes what is reported
but not which model is selected.
When comparing a pair of methods
we report the median difference, the inter-quartile range, the number of
datasets on which the first method wins, ties or loses, and the $p$-value of a
two-sided Wilcoxon signed rank test. We count a tie when the difference is at most $0.001$ (that is $0.1$
percentage points of accuracy or $0.001$ in $R^2$).

\section{Main Results}
\label{sec:results}

We organise the results around the four findings.
\Cref{sec:res-projection} compares the three ways of building $\rmM$.
\Cref{sec:res-vs-relu} compares the sparse representation against unrestricted
ones at the same expanded dimension. \Cref{sec:res-binary} restricts both to
binary codes. \Cref{sec:res-sparsity} fixes the sparsity level instead of
tuning it. Classification and regression are reported together in each except
\cref{sec:res-sparsity}, which is classification only, and every comparison is paired within an expansion factor as described in
\cref{sec:expsetup}.

\begin{figure}[t]
\centering
\includegraphics[width=0.97\linewidth,height=3.95in,keepaspectratio]{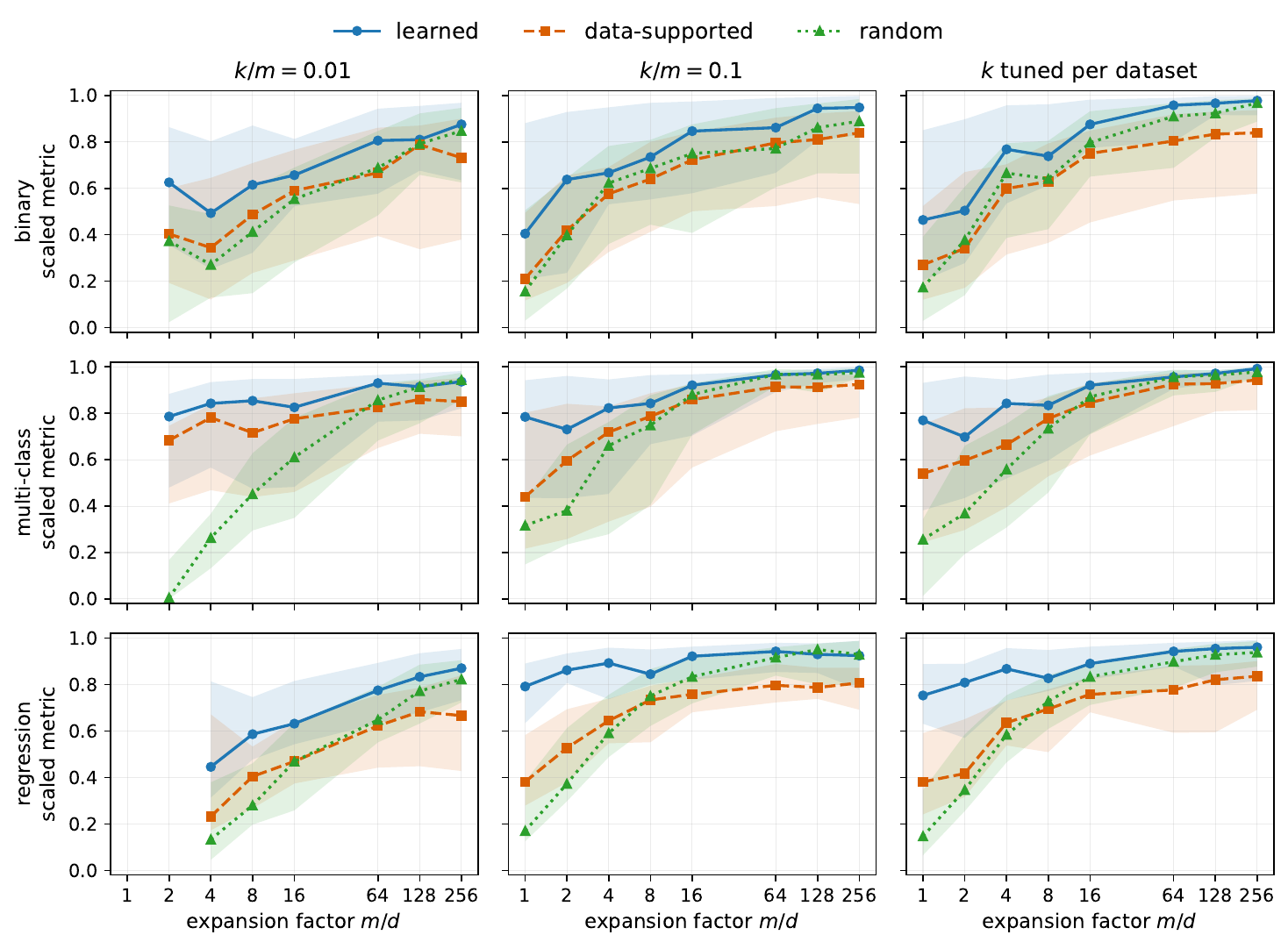}
\caption{Scaled metrics (accuracy for classification, $R^2$ for regression;
\emph{higher is better}) of the three constructions of $\rmM$ against the
expansion factor $(m/d)$, with $\topk$ sparsification throughout. Rows are task types
(binary classification, multi-class classification and regression) and columns
are sparsity levels $(k/m)$, with the rightmost column selecting $k$ on validation.
Each line is the median over datasets and the ribbon is the inter-quartile
range.
As discussed in \cref{sec:expsetup}, evaluations for $k < 1$ and $m > 30000$ are skipped so results for small $m/d$ and small $k/m$ are absent when $d$ is small.
Learned \eas improves upon both theoretical
constructions at small expansion factors in every panel. As the expansion grows
the random construction catches up, while the data-supported one mostly stays below
(\cref{sec:res-projection}).}
\label{fig:scaled}
\end{figure}

\subsection{What is gained by learning the expansion?}
\label{sec:res-projection}

Here we compare the performance of the learned expansion matrix to existing theoretical constructions, and the
results are presented in \cref{fig:scaled}. We first compare the learned
expansion to the data-independent one.
The learned expansion gives higher accuracy at every expansion factor we
consider, and the improvement is largest when the expansion is narrowest. The
median gain over datasets is $3.0$ accuracy points at $m = d$, and it drops to
$2.55$ points at $m = 4d$, $1.0$ point at $m = 16d$ and $0.13$ points at
$m = 256d$. We see a significant difference at every expansion factor up to
$m = 128d$ ($p \leq 0.006$, and $p < 10^{-4}$ for $m \leq 16d$), and no
significant difference at $m = 256d$ ($p = 0.127$). The regression tasks behave
the same way: learning the projection is worth a median $0.185$ in $R^2$ at
$m = d$, that gain falls to $0.002$ at $m = 256d$, and it is significant at
every expansion factor up to and including $m = 64d$ but at neither of the two
widest ($p \geq 0.14$), with $R^2$ raw or floored at zero. In other words, learning the
expansion gives much of what a larger expanded dimension would have given
anyway, and once it is large enough there is little left for learning to add.
In practice this says a learned \eas layer can be made small.

We next compare the learned expansion to the data-supported one. Here the gain
does not decay with the expanded dimension. It is $0.90$ points at $m = d$,
$1.18$ at $m = 8d$, $0.65$ at $m = 64d$ and $1.12$ at $m = 256d$, and we see a
significant difference at every expansion factor ($p \leq 0.0011$), on a pool
that stays between $57$ and $71$ datasets throughout. On regression the gain
shrinks, from $0.13$ in $R^2$ at $m = d$ to $0.03$ at $m = 256d$, but it too is
significant at every expansion factor ($p < 10^{-4}$, $R^2$ raw or floored).
So against a projection drawn without
reference to the data learning stops paying once the expansion is wide, and
against one whose rows come from the data it keeps paying.
\Cref{app:distinct} traces this difference to the data-supported $\rmM$
running out of distinct rows as $m$ grows, on regression as on classification.

\begin{figure}[t]
\centering
\includegraphics[width=0.97\linewidth,height=3.7in,keepaspectratio]{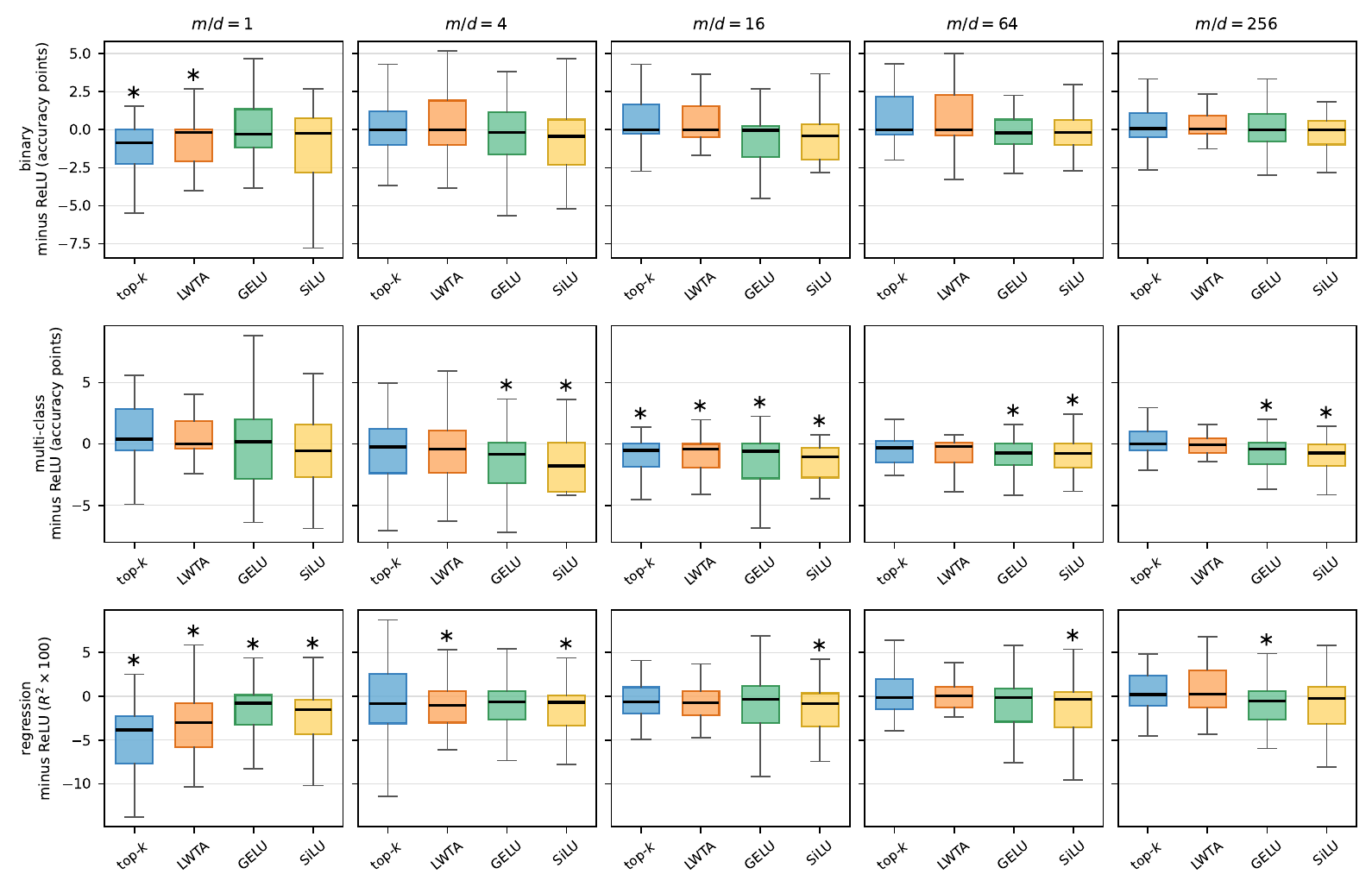}
\caption{Per-dataset differences against $\relu$ at equal expanded dimension,
with $\rmM$ learned for both methods and $k$ selected on validation for the sparse
activations. Rows are task types and columns are expansion factors. Each box is
the distribution over datasets of the difference for one activation, so values
above zero mean the activation beats $\relu$ on that dataset. Within a panel
every activation is scored on the same datasets, and a star marks a significant
difference from $\relu$ in that panel (two-sided Wilcoxon, $p < 0.05$). Whiskers
extend to $1.5$ times the interquartile range and outliers are not drawn. All
differences are multiplied by $100$: accuracy differences in percentage points for
classification and differences in $R^2$ for regression.}
\label{fig:boxes}
\end{figure}

\begin{table}[t]
\caption{Each activation against $\relu$ at equal expanded dimension on the 73 classification tasks, with $\rmM$ learned for both methods. Each cell gives wins/ties/losses over datasets and, in parentheses, the $p$-value of a two-sided Wilcoxon signed rank test on the per-dataset differences; a star marks $p<0.05$. Within a column every activation is scored on the same datasets. Five of the eight expansion factors are shown here; \cref{tab:wtlfull} in the \cref{app:tables} gives all eight.}
\label{tab:wtl}
\centering
\small
\setlength{\tabcolsep}{4pt}
\begin{tabular}{lccccc}
\toprule
activation & $m/d=1$ & $m/d=4$ & $m/d=16$ & $m/d=64$ & $m/d=256$ \\
\midrule
top-$k$ & 17/15/25 (.198) & 27/9/35 (.585) & 24/12/35 (.727) & 27/9/35 (.800) & 31/9/27 (.321) \\
LWTA & 19/14/24 (.517) & 25/9/37 (.425) & 25/11/35 (.373) & 21/16/34 (.725) & 28/13/26 (.616) \\
GELU & 25/3/29 (.899) & 23/4/44 (.023)$^{\ast}$ & 19/10/42 (.002)$^{\ast}$ & 21/8/42 (.005)$^{\ast}$ & 22/8/37 (.101) \\
SiLU & 21/6/30 (.255) & 21/4/46 (.001)$^{\ast}$ & 15/6/50 ($<$.001)$^{\ast}$ & 21/6/44 (.002)$^{\ast}$ & 20/11/36 (.007)$^{\ast}$ \\
\bottomrule
\end{tabular}
\end{table}

\subsection{How does \eas fare against unrestricted expanded representations?}
\label{sec:res-vs-relu}

Here we compare the learned \eas representations to standard (dense) expanded representations, and the results are presented in \cref{fig:boxes} and \cref{tab:wtl}.
In \cref{fig:boxes} we can see that the boxes for $\topk$ and $\lwta$ sit on zero
at every expansion factor while those for \texttt{GELU} and \texttt{SiLU} sit below it once
$m \geq 4d$, and the table makes this precise. The \eas representation at
$k \leq 0.1m$ is statistically indistinguishable from the unrestricted one at
every expansion factor we study regardless of whether we use $\topk$ or $\lwta$ for the sparsification (as we had expected):
For $\topk$, the median difference stays between $-0.22$
and $+0.05$ accuracy points and no Wilcoxon test reaches significance
($p \geq 0.177$); for $\lwta$, $p \geq 0.105$.
The smooth activations instead do not necessarily match $\relu$. \texttt{GELU} falls
short of $\relu$ by a median $0.16$ to $0.44$ points and is significantly
behind at $m/d \in \{4, 8, 16, 64, 128\}$, and \texttt{SiLU} falls short by $0.25$ to
$0.92$ points and is significantly behind at every expansion factor from
$m = 4d$ upwards. Parity with $\relu$ is therefore not something every
alternative activation achieves. \eas trains no slower, but its train--test gap is larger (\cref{app:conv}).

On the regression tasks, the \eas representation matches $\relu$ once the
expansion is at least $4d$. With $\topk$ it is significantly worse only at
$m = d$ and $m = 2d$. From $m = 4d$ upwards no test is significant
($p \geq 0.070$) and the median difference stays within $0.014$ of zero, whether
$R^2$ is left raw or floored at zero. $\lwta$ behaves the same way from $m = 8d$
($p \geq 0.082$). It is also significantly behind at $m = 4d$, but that result
disappears when we remove the few configurations where $\lwta$ keeps more than
$k$ units (\cref{app:lwta}). As on classification, the smooth activations
fall short of $\relu$: \texttt{GELU} is significantly behind at
$m/d \in \{1, 2, 8, 256\}$, and \texttt{SiLU} at every expansion factor up to
$m = 64d$ (and at $m = 256d$ when $R^2$ is floored), by medians of $0.003$ to
$0.010$ and $0.003$ to $0.018$ in $R^2$ respectively. \Cref{tab:wtlreg} gives
every value.
This range overlaps the one where learning $\rmM$ still pays: at $m = 4d$ the
learned $\rmM$ is ahead of a random one by a median $0.09$ in $R^2$, and it stays
significantly ahead through $m = 64d$. So between $m = 4d$ and $m = 64d$ the
representation can be both learned and sparse at no measurable cost.

We state two important qualifications: (i)~The sparsity level $k$ is selected on validation for the \eas representations, and the dense representations do not have any such equivalent knob.
(ii)~The aggregated classification results in \cref{tab:wtl} hide a difference
between the task types: $\topk$ is behind $\relu$ on the binary
tasks at $m = d$ ($0.85$ points, $p = 0.0005$) and on the multi-class tasks at
$m = 8d$ and $m = 16d$ ($0.49$ and $0.52$ points, $p \leq 0.041$). These are three small
significant results out of the sixteen comparisons obtained by splitting each of
the eight expansion factors by task type, and \cref{fig:boxes} marks those at
the expansion factors it shows.

\begin{figure}[t]
\centering
\includegraphics[width=\linewidth]{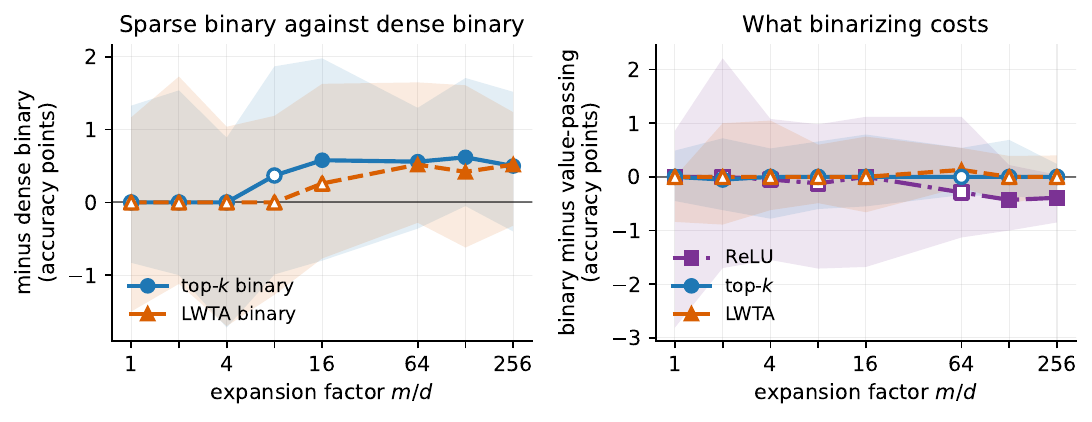}
\caption{The effect of binarizing expanded representations on classification tasks.
{\bf [Left]} The signed difference between the accuracy of the binarized \eas and binarized $\relu$ representations ({\em higher values indicate better \eas performance relative to the dense versions}).
{\bf [Right]} Per-method performance degradation due to binarization ({\em lower values imply larger degradations}).
Lines are
medians over datasets and ribbons the interquartile range; filled markers are
significant under a two-sided Wilcoxon signed rank test ($p < 0.05$) and hollow
ones are not. \Cref{tab:wtlbin} gives every value.}
\label{fig:binary}
\end{figure}

\subsection{How robust is \eas to binarization of expanded representations?}
\label{sec:res-binary}

We now restrict the expanded representation to be binary, so that the expanded representations post-activation $\rvz = \sigma(\rmM \rvx) \in \{0, 1\}^m$; we learn the model parameters using the straight-through estimator.
\Cref{fig:binary} shows the
classification comparisons and \cref{tab:wtlbin} gives every value; the
regression results close this section.
Binarizing $\relu$, \texttt{GELU}
or \texttt{SiLU} gives the same function, so there is a single dense binary baseline.

On classification, the binarized \eas representation is the better of the two once the expanded
dimension is large. With $\topk$ sparsification, we see no significant difference for $m \leq 8d$, and then
a median advantage of $0.58$ points at $m = 16d$ ($p = 0.043$), $0.56$ at
$m = 64d$ ($p = 0.023$), $0.62$ at $m = 128d$ ($p = 0.0003$) and $0.50$ at
$m = 256d$ ($p = 0.002$). The binarized $\lwta$ sparsification behaves the same
way, reaching a median advantage of $0.52$ points at $m = 256d$
($p = 0.005$). So the advantage appears once $m$ is at least $16d$.

That advantage could stem from the \eas representations improving or the dense one
degrading. We separate the two by comparing each activation against its own
value-passing form, which tells us what binarization costs it. Binarizing
$\relu$ costs a median
$0.43$ points at $m = 128d$ ($p = 0.027$) and $0.39$ points at $m = 256d$
($p = 0.004$), and nothing detectable at smaller expanded dimensions.
Binarizing $\topk$ costs nothing at any expansion factor we study: the median
difference is zero throughout and no test comes close to significance
($p \geq 0.229$). Binarizing $\lwta$ is the same, with one exception at $m = 64d$
where it is a small \emph{gain} of $0.13$ points ($p = 0.025$).

These results indicate that the advantage of \eas under binarization is not
that the sparse representation improves, but that it loses nothing while the
dense one loses about $0.4$ accuracy points. The \eas representation
can carry information in its nonzero coordinates
and thus appears to handle the binarization without any degradation.
The memory and
compute savings of a binary representation (\cref{app:sparse-advantages})
are therefore available from \eas at
no measurable cost in accuracy.

In regression (with 48 datasets)
binarized $\topk$ representation is the better than binarized $\relu$ representations once the expansion is
large --- by a median $0.014$, $0.028$ and $0.017$
in $R^2$ at $m = 16d$, $64d$ and $128d$ ($p \leq 0.031$, $R^2$ raw or floored),
with no significant difference below $m = 16d$ or at $m = 256d$. Binarized
$\lwta$ is ahead of $\relu$ at $m = 64d$ and $128d$. The reason differs from
classification. On regression, binarizing costs every activation something:
$\topk$ loses a median $0.010$ to $0.037$ in $R^2$, significantly at every
expansion factor, and $\relu$ loses $0.015$ to $0.068$, significantly everywhere
except at $m = 256d$ with $R^2$ floored. The \eas representation keeps its
advantage because it loses less, not because it loses nothing.
See detailed results in \cref{tab:wtlreg} in \cref{app:tables}.

\begin{wrapfigure}[19]{r}{0.43\textwidth}
\vskip -0.5in
\centering
\includegraphics[width=0.43\textwidth]{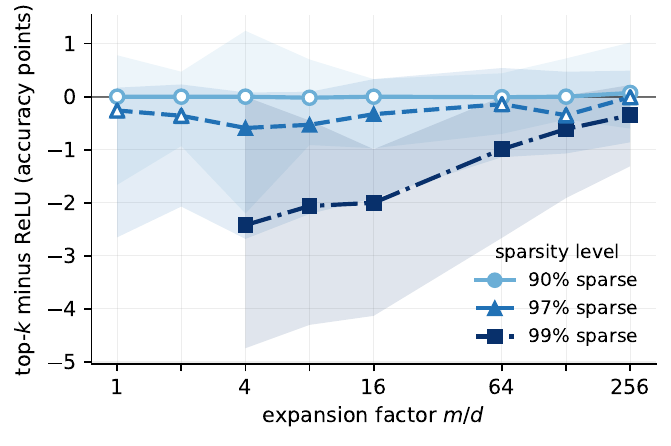}
\caption{The signed difference between $\topk$ based \eas at a fixed sparsity  $(k/m)$ and dense $\relu$ representations with varying expansion factors $(m/d)$.
Lines are medians over datasets, ribbons the inter-quartile range; filled markers are significant ($p < 0.05$).
A point is drawn only where at least $30$ datasets are present: $99\%$ sparsity
leaves $8$ and $23$ datasets at $m = d$ and $m = 2d$. \Cref{tab:wtlsparse} gives every value.}
\label{fig:sparsity}
\end{wrapfigure}

\subsection{How sparse can the representation be?}
\label{sec:res-sparsity}

In \cref{sec:res-vs-relu} we selected $k$ on validation, so the sparsity level
that ended up being used is tuned. Here we fix it instead (\cref{fig:sparsity},
\cref{tab:wtlsparse}), do no selection over $k$, and ask at which expanded dimensions each level is affordable. The
point is not that more sparsity costs more accuracy, which is expected, but the
rate at which a larger expanded dimension buys that cost back.

At $90\%$ sparsity ($k/m = 0.10$) the \eas representation matches $\relu$ at
every expansion factor, with a median difference of zero and no significant
difference anywhere ($p \geq 0.21$). At $99\%$ sparsity ($k/m = 0.01$) it is
behind, and significantly so from $m = 4d$ upwards, but the deficit shrinks
monotonically as the expanded dimension grows: $2.42$ points at $m = 4d$,
$2.06$ at $m = 8d$, $2.00$ at $m = 16d$, $0.99$ at $m = 64d$, $0.61$ at
$m = 128d$ and $0.35$ at $m = 256d$. The $97\%$ level sits between the two. It
is significantly behind $\relu$ at $m/d \in \{4, 8, 16\}$, by a median $0.59$,
$0.53$ and $0.33$ points, and we see no significant difference from $m = 64d$
onwards.

We see the expected behaviour, with the cost of a sparsity level falling as the
expanded dimension grows. What the numbers add is the exchange rate: the
sparsity level that is free moves from $90\%$ at $m = 4d$ to $97\%$ at
$m = 64d$. There is therefore no single sparsity level at which \eas begins to
lose; instead, the larger the expanded dimension, the sparser the representation
can be before it costs accuracy.

\section{Concluding Remarks}
\label{sec:conc}

In this work, we considered an elaborate evaluation with $121$ datasets spanning binary/multiclass classification and regression and roughly $50{,}000$ configurations to study the empirical behaviour of the neurobiologically inspired and theoretically motivated \eas representations. Our results support the practical viability of these representations when learned via standard SGD: First, we see that the learned \eas representations perform significantly better than the existing theoretical constructions, especially when $m \in [d, 256d]$ are of practical sizes (as opposed to the required $m \sim O(\exp(d))$ for the theoretical constructions). Then, we see that these learned \eas representations are quite competitive to the standard expanded representation obtained with $\relu$ networks (and similar activations), and when restricted to binary representations, \eas even outperforms standard representations once the expansion is large enough.

\paragraph{Limitations.}
The main limitation of our work is that it is an empirical study without theoretical guarantees for the learned representations, limited to single-hidden-layer networks applied to tabular datasets (albeit over 120 of them). Our choice of this network is due to its prevalence in architectures such as Transformers and MLP-Mixers. However, we do not directly consider such architectures in our evaluations as they would introduce confounders that would make it hard to isolate the effect of the \eas representations.
One thing of note is that the explicit $k$-sparsification in \eas representations allows us to tune the sparsity level $(k/m)$ in a problem specific manner, whereas the existing unrestricted expanded representations do not have any such knob  (see \cref{app:tuning}).
Furthermore, within our elaborate evaluation, we end up with some inconsistencies. First, the minimal sparsity requirement of $k \geq 1$ and the computational limit of $m \leq 30000$ leads to situations where some configurations are not valid for datasets whose $d$ is either too large or too small. While we do not have any solution for this, we report the size of the dataset pool for each expansion in \cref{app:counts}.
Second, with the data-supported construction, when $m$ is large relative to $n$, the expansion matrix $\rmM$ ends up with repeated rows, thereby introducing redundancies and hampering the benefits of the expansion. Again, we do not have a solution for this but we report it in detail in \cref{app:distinct}.

\section*{AI use statement}
\label{sec:ai-disclosure}

In this work, we used generative AI tools for writing and editing portions of
the text, for writing the experiment, analysis and plotting code (including the
dataset preprocessing) from which every figure and table is generated, for
proposing experimental designs and diagnostics, for helping interpret results,
and for retrieving and summarizing prior work. We have not used generative AI
tools to generate the research question or the hypotheses, which are the
authors' own, and the remaining categories requiring disclosure are not
applicable to this work.

We have reviewed all AI-assisted work. All code was run by the authors and its
outputs verified; the analysis was cross-checked against a second,
independently written script. Every citation and every attribution of a result
to prior work was checked by the authors against the original paper; two claims
of novelty produced this way were incorrect and were withdrawn following that
check. We take responsibility for the final content of this work, including
text, claims or artifacts produced with the aid of generative AI.

\clearpage
\bibliographystyle{plainnat}
\bibliography{refs}

\begin{thebibliography}{23}
\providecommand{\natexlab}[1]{#1}
\providecommand{\url}[1]{\texttt{#1}}
\expandafter\ifx\csname urlstyle\endcsname\relax
  \providecommand{\doi}[1]{doi: #1}\else
  \providecommand{\doi}{doi: \begingroup \urlstyle{rm}\Url}\fi

\bibitem[Boucheron et~al.(2005)Boucheron, Bousquot, and
  Lugosi]{boucheron2005_theory_of_classification}
S.~Boucheron, O.~Bousquot, and G.~Lugosi.
\newblock Theory of classification : A survey of some recent advances.
\newblock \emph{ESAIM: Probability and Statistics}, 9:\penalty0 323--375, 2005.

\bibitem[Caron et~al.(2013)Caron, Ruta, Abbott, and Axel]{caron2013}
Sophie J.~C Caron, Vanessa Ruta, L.~F Abbott, and Richard Axel.
\newblock Random convergence of olfactory inputs in the drosophila mushroom
  body.
\newblock \emph{Nature}, 497\penalty0 (7447):\penalty0 113--117, 2013.
\newblock URL \url{https://doi.org/10.1038/nature12063}.

\bibitem[Chacron et~al.(2011)Chacron, Longtin, and Maler]{chacron2011}
M.~Chacron, A.~Longtin, and L.~Maler.
\newblock Efficient computation via sparse coding in electrosensory neural
  networks.
\newblock \emph{Current Opinion in Neurobiology}, 21\penalty0 (5):\penalty0
  752--760, 2011.
\newblock URL \url{https://doi.org/10.1016/j.conb.2011.05.016}.

\bibitem[Dasgupta and Tosh(2020)]{dasguptatosh2020}
Sanjoy Dasgupta and Christopher Tosh.
\newblock Expressivity of expand-and-sparsify representations, 2020.
\newblock URL \url{https://arxiv.org/abs/2006.03741}.

\bibitem[Dasgupta et~al.(2017)Dasgupta, Stevens, and
  Navlakha]{dasgupta2017flyhash}
Sanjoy Dasgupta, Charles~F. Stevens, and Saket Navlakha.
\newblock A neural algorithm for a fundamental computing problem.
\newblock \emph{Science}, 358\penalty0 (6364):\penalty0 793--796, 2017.
\newblock \doi{10.1126/science.aam9868}.
\newblock URL \url{https://doi.org/10.1126/science.aam9868}.

\bibitem[Dasgupta et~al.(2018)Dasgupta, Sheehan, Stevens, and
  Navlakha]{dasgupta2018novelty}
Sanjoy Dasgupta, Timothy~C. Sheehan, Charles~F. Stevens, and Saket Navlakha.
\newblock A neural data structure for novelty detection.
\newblock \emph{Proceedings of the National Academy of Sciences}, 115\penalty0
  (51):\penalty0 13093--13098, 2018.
\newblock URL \url{https://doi.org/10.1073/pnas.1814448115}.

\bibitem[Grave et~al.(2017)Grave, Joulin, and Usunier]{grave2017improving}
Edouard Grave, Armand Joulin, and Nicolas Usunier.
\newblock Improving neural language models with a continuous cache.
\newblock In \emph{International Conference on Learning Representations}, 2017.
\newblock URL \url{https://openreview.net/forum?id=B184E5qee}.

\bibitem[Gupta et~al.(2021)Gupta, Dar, Goodman, Ciprut, and
  Berant]{gupta2021memory}
Ankit Gupta, Guy Dar, Shaya Goodman, David Ciprut, and Jonathan Berant.
\newblock Memory-efficient transformers via top-k attention.
\newblock In \emph{Proceedings of the Second Workshop on Simple and Efficient
  Natural Language Processing}, pages 39--52, 2021.
\newblock URL \url{https://aclanthology.org/2021.sustainlp-1.5}.

\bibitem[Liang et~al.(2021)Liang, Ryali, Hoover, Grinberg, Navlakha, Zaki, and
  Krotov]{liang2021can}
Yuchen Liang, Chaitanya Ryali, Benjamin Hoover, Leopold Grinberg, Saket
  Navlakha, Mohammed~J Zaki, and Dmitry Krotov.
\newblock Can a fruit fly learn word embeddings?
\newblock In \emph{International Conference on Learning Representations}, 2021.
\newblock URL \url{https://openreview.net/forum?id=xfmSoxdxFCG}.

\bibitem[Olshausen and Field(2004)]{olshausen2004}
Bruno~A Olshausen and David~J Field.
\newblock Sparse coding of sensory inputs.
\newblock \emph{Current Opinion in Neurobiology}, 14\penalty0 (4):\penalty0
  481--487, 2004.
\newblock URL \url{https://doi.org/10.1016/j.conb.2004.07.007}.

\bibitem[Ram and Sinha(2022)]{ram2022flynn}
Parikshit Ram and Kaushik Sinha.
\newblock Federated nearest neighbor classification with a colony of
  fruit-flies.
\newblock In \emph{Proceedings of the AAAI Conference on Artificial
  Intelligence}, volume~36, pages 8097--8105, 2022.
\newblock URL \url{https://doi.org/10.1609/aaai.v36i7.20775}.

\bibitem[Ryali et~al.(2020)Ryali, Hopfield, Grinberg, and Krotov]{ryali2020bio}
Chaitanya Ryali, John Hopfield, Leopold Grinberg, and Dmitry Krotov.
\newblock Bio-inspired hashing for unsupervised similarity search.
\newblock In \emph{International conference on machine learning}, pages
  8295--8306. PMLR, 2020.
\newblock URL \url{https://proceedings.mlr.press/v119/ryali20a.html}.

\bibitem[Shalev-Shwartz and Ben-David(2014)]{shalev2014understanding}
Shai Shalev-Shwartz and Shai Ben-David.
\newblock \emph{Understanding Machine Learning: From Theory to Algorithms}.
\newblock Cambridge University Press, 2014.
\newblock ISBN 9781107057135.

\bibitem[Sinha(2024)]{sinha2024nonparametric}
Kaushik Sinha.
\newblock Non-parametric classification via expand-and-sparsify representation.
\newblock In \emph{Advances in Neural Information Processing Systems}, pages
  29657--29696, 2024.
\newblock URL \url{https://doi.org/10.52202/079017-0933}.

\bibitem[Sinha and Ram(2021)]{sinha2021fbfc}
Kaushik Sinha and Parikshit Ram.
\newblock Fruit-fly inspired neighborhood encoding for classification.
\newblock In \emph{Proceedings of the 27th ACM SIGKDD Conference on Knowledge
  Discovery \& Data Mining}, pages 1470--1480, 2021.
\newblock \doi{10.1145/3447548.3467246}.
\newblock URL \url{https://doi.org/10.1145/3447548.3467246}.

\bibitem[Sinha and Tosh(2026)]{sinhatosh2026density}
Kaushik Sinha and Christopher Tosh.
\newblock Optimal rates for density and mode estimation with
  expand-and-sparsify representations.
\newblock In \emph{Proceedings of the International Conference on Artificial
  Intelligence and Statistics}, 2026.
\newblock URL \url{https://openreview.net/forum?id=kHuqnMOJwD}.

\bibitem[Srivastava et~al.(2013)Srivastava, Masci, Kazerounian, Gomez, and
  Schmidhuber]{srivastava2013compete}
Rupesh~Kumar Srivastava, Jonathan Masci, Sohrob Kazerounian, Faustino Gomez,
  and J{\"u}rgen Schmidhuber.
\newblock Compete to compute.
\newblock In \emph{Advances in Neural Information Processing Systems}, 2013.
\newblock URL \url{https://dl.acm.org/doi/10.5555/2999792.2999870}.

\bibitem[Stettler and Axel(2009)]{stettler2009}
Dan~D. Stettler and Richard Axel.
\newblock Representations of odor in the piriform cortex.
\newblock \emph{Neuron}, 63\penalty0 (6):\penalty0 854--864, 2009.
\newblock URL \url{https://doi.org/10.1016/j.neuron.2009.09.005}.

\bibitem[Tolstikhin et~al.(2021)Tolstikhin, Houlsby, Kolesnikov, Beyer, Zhai,
  Unterthiner, Yung, Steiner, Keysers, Uszkoreit, et~al.]{tolstikhin2021mlp}
Ilya~O Tolstikhin, Neil Houlsby, Alexander Kolesnikov, Lucas Beyer, Xiaohua
  Zhai, Thomas Unterthiner, Jessica Yung, Andreas Steiner, Daniel Keysers,
  Jakob Uszkoreit, et~al.
\newblock Mlp-mixer: An all-mlp architecture for vision.
\newblock \emph{Advances in neural information processing systems},
  34:\penalty0 24261--24272, 2021.
\newblock URL
  \url{https://proceedings.neurips.cc/paper/2021/hash/cba0a4ee5ccd02fda0fe3f9a3e7b89fe-Abstract.html}.

\bibitem[Turner et~al.(2008)Turner, Bazhenov, and Laurent]{turner2008}
Glenn~C. Turner, Maxim Bazhenov, and Gilles Laurent.
\newblock Olfactory representations by drosophila mushroom body neurons.
\newblock \emph{Journal of Neurophysiology}, 99\penalty0 (2):\penalty0
  734--746, 2008.
\newblock URL \url{https://doi.org/10.1152/jn.01283.2007}.

\bibitem[Vaswani et~al.(2017)Vaswani, Shazeer, Parmar, Uszkoreit, Jones, Gomez,
  Kaiser, and Polosukhin]{vaswani2017attention}
Ashish Vaswani, Noam Shazeer, Niki Parmar, Jakob Uszkoreit, Llion Jones,
  Aidan~N Gomez, {\L}ukasz Kaiser, and Illia Polosukhin.
\newblock Attention is all you need.
\newblock \emph{Advances in neural information processing systems}, 30, 2017.
\newblock URL
  \url{https://proceedings.neurips.cc/paper/2017/hash/3f5ee243547dee91fbd053c1c4a845aa-Abstract.html}.

\bibitem[Wimbauer et~al.(2024)Wimbauer, Wu, Schoenfeld, Dai, Hou, He,
  Sanakoyeu, Zhang, Tsai, Kohler, Rupprecht, Cremers, Vajda, and
  Wang]{wimbauer2024_cache}
Felix Wimbauer, Bichen Wu, Edgar Schoenfeld, Xiaoliang Dai, Ji~Hou, Zijian He,
  Artsiom Sanakoyeu, Peizhao Zhang, Sam Tsai, Jonas Kohler, Christian
  Rupprecht, Daniel Cremers, Peter Vajda, and Jialiang Wang.
\newblock Cache me if you can: Accelerating diffusion models through block
  caching.
\newblock In \emph{2024 IEEE/CVF Conference on Computer Vision and Pattern
  Recognition (CVPR)}, pages 6211--6220, 2024.
\newblock URL \url{https://doi.org/10.1109/CVPR52733.2024.00594}.

\bibitem[Zhang(2004)]{zhang2004_statistical}
Tong Zhang.
\newblock Statistical behavior and consistency of classification methods based
  on convex risk minimization.
\newblock \emph{The Annals of Statistics}, 32\penalty0 (1):\penalty0 56--134,
  Feb 2004.

\end{thebibliography}

\clearpage
\appendix

\section{Related work}
\label{sec:related}
\paragraph{Expand-and-sparsify representations.}
An \eas representation maps $\rvx \in \R^d$ to a much higher-dimensional sparse
code by applying a linear map $\rmM \in \R^{m \times d}$ with $m \gg d$ and
retaining only the $k \ll m$ largest coordinates. The construction formalizes a
circuit in the \emph{Drosophila} olfactory system, in which a $d=50$-dimensional
signal is lifted to roughly $m=2{,}000$ Kenyon cells through a connectivity
pattern that appears random, after which feedback from a single inhibitory
neuron suppresses all but the $5\%$ ($k/m = 0.05$) highest-firing cells
\citep{dasgupta2017flyhash, dasgupta2018novelty}. Similar motifs are reported
in the olfactory system of rodents, the visual cortex of cats and the
electrosensory system of electric fish. \citet{dasguptatosh2020} give the
framework we adopt: the rows of $\rmM$ are drawn i.i.d.\ from a distribution
$\nu$ over $\R^d$, and $\rmM \rvx$ is sparsified either by $k$-winner-take-all
or by $k$-thresholding. \citet[Theorem 3]{dasguptatosh2020} establishes universal approximation
under $k$-WTA with $\nu$ uniform on $\mathbb{S}^{d-1}$ (the $d$ dimensional unit sphere), provided
$k \geq c_o(d \log m + \log 1/\delta)$; we restate their results, and the
corresponding lower bounds on $k$ from \citet{sinha2024nonparametric} and
\citet{sinhatosh2026density}, in \cref{app:theory}.

\paragraph{Data-dependent expansions.}
\citet[\S5]{dasguptatosh2020} analyze what they call a data-dependent expansion
mapping, in which the rows of $\rmM$ are drawn i.i.d.\ from a distribution
$\nu$ whose support coincides with the support of the data distribution, so the rows of $\rmM$
take values over the same support as the inputs. Their Theorem 9 gives an
approximation bound under this assumption with two properties the
data-independent bounds lack: $k$ may be taken as small as desired, and there
is no dependence on the ambient dimension $d$, \emph{even in the constants}.
They describe this as ``mild data-dependence'' and note it ``leads to
significantly better approximation'', and their stated open problem is ``more
sophisticated forms of adaptivity, for instance based on a clustering of the
data''.

The subsequent literature nonetheless works in the data-independent setting.
\citet{sinha2021fbfc} and \citet{ram2022flynn} take
$\rmM \in \{0,1\}^{m \times d}$ random with $s \ll d$ nonzero entries per row.
\citet{sinha2024nonparametric} draws the rows i.i.d.\ uniform on $\mathbb{S}^{d-1}$ or
i.i.d.\ Gaussian $\mathcal{N}(\mathbf{0}, \mI_d)$, and closes by noting that ``in both algorithms, the
projection directions are chosen in a data-independent manner'', with
data-dependent choices left to future work. \citet{sinhatosh2026density}
likewise sample the rows from a fixed $\nu$.

\paragraph{Sparsity level and expanded dimension, empirically.}
\citet{ram2022flynn} tune over $70$ OpenML datasets with $k \in [8, 256]$ and
$m \in [2d, 2048d]$, and in a per-parameter study observe that for fixed $k$, increasing $m$ improves accuracy but only up to a bound,
and that increasing $k$ helps ``up to a point after which it can hurt
performance unless $m$ is increased as well'', since it reduces the sparsity of
the code. \Cref{sec:res-sparsity} quantifies the same interaction against a
$\relu$ baseline, which their study does not include.

\paragraph{Competition as an activation function.}
\citet{srivastava2013compete} propose local winner-take-all layers, in which
units are partitioned into blocks and only the maximum within each block
propagates its activation. Their framework permits any block size and any
activation, but in order to study competition in isolation they take the
activation to be the identity and use blocks of size two throughout, which
fixes the sparsity of the layer at exactly $50\%$.
We are explicitly interested in the \eas setting where we first have an expansion to $m$ dimensions followed by a $k$-sparsification with $k \ll m$. Thus, our setting introduces the expansion, and the high sparsity regime, neither of which is considered in \citet{srivastava2013compete}. But we consider $\lwta$ as a different sparsification scheme in addition to $\topk$ considered in existing literature on \eas representations for completeness.

\section{What the sparsity is worth in practice}
\label{app:sparse-advantages}

\Cref{sec:results} measures accuracy. This appendix sets out what an
extremely ($k \ll m$) and predictably (always exactly $k$ nonzeros) sparse
expanded representation buys in compute and memory, which is the reason the
accuracy question is worth asking at all.

\paragraph{Loading the next layer's weights.}
With $\rvz \in \R^m$ as a $k$-sparse \eas representation, only the $k$ nonzero coordinates of $\rvz$ contribute to $\rmW \rvz$, so only
the $k$ columns of $\rmW \in \R^{l \times m}$ that multiply them need to be
loaded. For a single input this is a factor $m/k$ fewer columns, which at our loosest setting is $10$ and at our
tightest is $1000$. With a batch of $b$ inputs the union of the active
coordinates is what matters, so at most $bk$ columns are needed. That bound is
attained only when no two inputs in the batch share an active coordinate. In
practice inputs that are close to each other activate overlapping sets, so the
number of columns actually loaded is smaller than $bk$, and how much smaller
depends on how correlated the batch is. Overall, this allows for a reduction of the memory overhead from the naive $O(lm)$ to load the whole $\rmW$ to $O(lbk)$ where we just load the ``batch-activated'' rows of $\rmW$. Sparse linear algebra kernels can also reduce the naive computational complexity of $\rmW \rvz$ from $O(lm)$ to $O(lk)$ for each $\rvz$.

\paragraph{Backpropagation and caching.}
The same argument applies to the backward pass: gradients flow only through
the $k$ nodes that were active, so the memory needed to backpropagate through
the layer scales with $k$ rather than $m$ \citep{gupta2021memory}. Where
intermediate activations are cached rather than recomputed
\citep{grave2017improving, wimbauer2024_cache}, a representation with exactly
$k$ nonzeros can be stored as $k$ index-value pairs instead of $m$ values,
which is a factor $m/2k$ in the value-passing form and $m/k$ indices alone in
the binary form considered in \cref{sec:res-binary}.

\paragraph{Binary codes.}
A binary \eas representation is fixed entirely by which $k$ coordinates are
active, so it can be stored as $k$ indices of $\lceil \log_2 m \rceil$ bits each
and no values; at $m = 30{,}000$ that is $15$ bits per active unit. The next
layer then needs no multiplications, since $\rmW \rvz$ is the sum of the $k$
columns of $\rmW$ that $\rvz$ selects. A binarized $\relu$ layer shares the
second property, but the number of units it activates varies from input to
input, so its storage cannot be fixed in advance. \Cref{sec:res-binary} shows
that on classification \eas gets these savings at no measurable cost in
accuracy.

\paragraph{Why predictability matters.}
A representation that is merely sparse on average still has to be allocated
for its worst case. Top-$k$ is sparse by construction, with the same count for
every input, so the working set is known ahead of time and does not have to be
sized for an unlucky batch. This is the property that separates \eas from
sparsity that emerges from a $\relu$, and it is why we hold $k$ fixed rather
than letting the layer choose how many units to activate.

\section{The tuning asymmetry}
\label{app:tuning}

We select $k$ on validation for top-$k$ and LWTA, while $\relu$, GELU and SiLU
have no comparable hyper-parameter, so part of what \cref{sec:res-vs-relu}
compares is the freedom to tune. Two things bound that freedom. Every setting
we allow keeps at most a tenth of the coordinates, so the sparse activations
are under a constraint the dense ones do not face even at their loosest. And
\cref{sec:res-sparsity} repeats the comparison at fixed sparsity levels with no
selection over $k$ at all, where the $90\%$ level matches $\relu$ at every
expansion factor. A reader who wants the strictest reading should take that
result rather than \cref{sec:res-vs-relu} as the headline. Validation also
tends to pick the loosest level we allow: the selected $k/m$ is the top of our
grid ($0.10$) in about half of the classification configurations and in $59\%$
to $69\%$ of the regression ones, so the cap $k \leq 0.1m$ binds for the sparse
activations rather than being a formality.

\section{Theoretical guarantees and the resulting constraint on $k$}
\label{app:theory}

This appendix collects the results referred to in \cref{sec:related}. Nothing
here is ours; we state it because the constraint it places on $k$ is what makes
our sparsity grid a departure from the theory rather than an instance of it.

\paragraph{Approximation under a data-independent projection.}
\citet{dasguptatosh2020} draw the rows of $\rmM$
i.i.d.\ from a distribution $\nu$ over $\R^d$ and sparsify $\rvy = \rmM \rvx$
either by $k$-winner-take-all or by \emph{$k$-thresholding}, in which each
coordinate has its own threshold $\tau_j$ set so that $z_j$ fires a $k/m$
fraction of the time. Their Theorem 3 establishes a universal approximation
property under $k$-WTA with $\nu$ uniform on $\mathbb{S}^{d-1}$: for $\lambda$-Lipschitz
$f: \R^d \to \R$, an explicitly specified linear function of $\rvz = \topk(\rmM \rvx)$, denoted by $\widehat f: \R^d \to \R$, attains
$\sup_\rvx |\widehat f(\rvx) - f(\rvx)| \leq c_d \lambda (4k/m)^{1/(d-1)}$ whenever
$k \geq c_o(d \log m + \log 1/\delta)$. Their Theorem 4 shows that this scheme
is \emph{not} adaptive to manifold structure, since on a one-dimensional circle
in $\mathbb{S}^{d-1}$ most $z_j$ are never activated, while their Theorem 8 shows
that $k$-thresholding is.

\paragraph{Approximation under a data-supported projection.}
Theorem 9 of \citet{dasguptatosh2020} gives an optimal $k$ of $d_o/2$ under
their data-dependent sampling, an absolute quantity fixed by the intrinsic
dimension $d_o$. We do not use this as a prescription: $d_o$ is not known in
practice, and parameterizing by $k = s \cdot m$ has the practical advantage of
controlling the sparsity of the representation directly, which is the quantity
we want to vary.

\paragraph{The lower bounds on $k$.}
The guarantees above constrain $k$ from below.
\citet{dasguptatosh2020} require $k \geq c_o(d \log m + \log 1/\delta)$ for
their cell-diameter bound; Theorem 3.12 of \citet{sinha2024nonparametric}
carries the same condition, and its empirical section accordingly sets
$k = d \log m$; Lemma 5.2 of \citet{sinhatosh2026density} requires
$k > C_\delta^2 \, d \log m$. These are lower bounds ensuring the response
regions are well estimated, not prescriptions for an optimal $k$, and they sit
inside asymptotic regimes far from what is reachable in practice: Corollary
3.13 of \citet{sinha2024nonparametric} sets $m = k\,n^{(d-1)/(2\beta+d-1)}$,
which is exponentially large in $d$. We therefore parameterize by
$k = s \cdot m$ and report results as functions of $s$.
Since our $m$ is not exponential in $d$, the extreme sparsity regime we work in,
$k \leq 0.1m$, is outside the range these bounds cover, and our results should be
read as evidence about that regime rather than as a test of the theory.

\section{Wins, ties and losses at every expansion factor}
\label{app:tables}
\suppressfloats[t]
The tables below give every comparison of \cref{sec:results} at every
expansion factor: \cref{tab:wtlfull} the classification comparisons against $\relu$ summarized in
\cref{tab:wtl}, \cref{tab:wtlreg} the same comparisons and the binary ones on the
regression tasks, \cref{tab:wtlbin} the binary codes on classification, and
\cref{tab:wtlsparse} the fixed sparsity levels of \cref{sec:res-sparsity}.

\begin{table}[htb]
\caption{\Cref{tab:wtl} at every expansion factor. Each cell gives wins/ties/losses and, below it, the Wilcoxon $p$-value; a star marks $p<0.05$.}
\label{tab:wtlfull}
\centering
\scriptsize
\setlength{\tabcolsep}{3pt}
\begin{tabular}{lcccccccc}
\toprule
activation & $m/d=1$ & $m/d=2$ & $m/d=4$ & $m/d=8$ & $m/d=16$ & $m/d=64$ & $m/d=128$ & $m/d=256$ \\
\midrule
top-$k$ & \shortstack{17/15/25\\ (.198)} & \shortstack{22/13/31\\ (.686)} & \shortstack{27/9/35\\ (.585)} & \shortstack{22/12/37\\ (.178)} & \shortstack{24/12/35\\ (.727)} & \shortstack{27/9/35\\ (.800)} & \shortstack{30/12/28\\ (.574)} & \shortstack{31/9/27\\ (.321)} \\
LWTA & \shortstack{19/14/24\\ (.517)} & \shortstack{20/15/31\\ (.370)} & \shortstack{25/9/37\\ (.425)} & \shortstack{24/8/39\\ (.105)} & \shortstack{25/11/35\\ (.373)} & \shortstack{21/16/34\\ (.725)} & \shortstack{31/8/31\\ (.757)} & \shortstack{28/13/26\\ (.616)} \\
GELU & \shortstack{25/3/29\\ (.899)} & \shortstack{31/5/30\\ (.922)} & \shortstack{23/4/44$^{\ast}$\\ (.023)} & \shortstack{22/7/42$^{\ast}$\\ (.014)} & \shortstack{19/10/42$^{\ast}$\\ (.002)} & \shortstack{21/8/42$^{\ast}$\\ (.005)} & \shortstack{23/11/36$^{\ast}$\\ (.044)} & \shortstack{22/8/37\\ (.101)} \\
SiLU & \shortstack{21/6/30\\ (.255)} & \shortstack{28/3/35\\ (.321)} & \shortstack{21/4/46$^{\ast}$\\ (.001)} & \shortstack{20/4/47$^{\ast}$\\ (.002)} & \shortstack{15/6/50$^{\ast}$\\ ($<$.001)} & \shortstack{21/6/44$^{\ast}$\\ (.002)} & \shortstack{17/8/45$^{\ast}$\\ (.003)} & \shortstack{20/11/36$^{\ast}$\\ (.007)} \\
\bottomrule
\end{tabular}
\end{table}

\begin{table}[htb]
\caption{Binary codes (\cref{sec:res-binary}). The first two rows compare each binarized sparse code against the binarized dense layer; the last three give what binarizing costs each activation, as its binary form minus its own value-passing form. Each cell gives the median difference in accuracy points, wins/ties/losses over datasets and the two-sided Wilcoxon $p$-value; a star marks $p<0.05$. $M$ is learned on both sides.}
\label{tab:wtlbin}
\centering
\scriptsize
\setlength{\tabcolsep}{2.5pt}
\begin{tabular}{lcccccccc}
\toprule
comparison & $m/d=1$ & $m/d=2$ & $m/d=4$ & $m/d=8$ & $m/d=16$ & $m/d=64$ & $m/d=128$ & $m/d=256$ \\
\midrule
top-$k$ bin.\ $-$ $\relu$ bin. & \begin{tabular}[c]{@{}c@{}}$0.00$\\ 26/13/18\\ (.821)\end{tabular} & \begin{tabular}[c]{@{}c@{}}$0.00$\\ 26/11/29\\ (.709)\end{tabular} & \begin{tabular}[c]{@{}c@{}}$0.00$\\ 30/8/33\\ (.301)\end{tabular} & \begin{tabular}[c]{@{}c@{}}$+0.37$\\ 38/5/28\\ (.274)\end{tabular} & \begin{tabular}[c]{@{}c@{}}$+0.58$\\ 40/4/27$^{\ast}$\\ (.043)\end{tabular} & \begin{tabular}[c]{@{}c@{}}$+0.56$\\ 44/5/22$^{\ast}$\\ (.023)\end{tabular} & \begin{tabular}[c]{@{}c@{}}$+0.62$\\ 48/5/17$^{\ast}$\\ ($<$.001)\end{tabular} & \begin{tabular}[c]{@{}c@{}}$+0.50$\\ 43/2/22$^{\ast}$\\ (.002)\end{tabular} \\ \addlinespace[4pt]
LWTA bin.\ $-$ $\relu$ bin. & \begin{tabular}[c]{@{}c@{}}$0.00$\\ 22/12/23\\ (.710)\end{tabular} & \begin{tabular}[c]{@{}c@{}}$0.00$\\ 28/10/28\\ (.920)\end{tabular} & \begin{tabular}[c]{@{}c@{}}$0.00$\\ 30/8/33\\ (.513)\end{tabular} & \begin{tabular}[c]{@{}c@{}}$0.00$\\ 31/9/31\\ (.709)\end{tabular} & \begin{tabular}[c]{@{}c@{}}$+0.26$\\ 36/9/26\\ (.189)\end{tabular} & \begin{tabular}[c]{@{}c@{}}$+0.52$\\ 44/8/19$^{\ast}$\\ (.011)\end{tabular} & \begin{tabular}[c]{@{}c@{}}$+0.42$\\ 42/5/23$^{\ast}$\\ (.034)\end{tabular} & \begin{tabular}[c]{@{}c@{}}$+0.52$\\ 44/3/20$^{\ast}$\\ (.005)\end{tabular} \\ \addlinespace[4pt]
$\relu$ bin.\ $-$ $\relu$ & \begin{tabular}[c]{@{}c@{}}$0.00$\\ 21/17/35\\ (.226)\end{tabular} & \begin{tabular}[c]{@{}c@{}}$0.00$\\ 27/14/31\\ (.860)\end{tabular} & \begin{tabular}[c]{@{}c@{}}$-0.05$\\ 27/9/36\\ (.669)\end{tabular} & \begin{tabular}[c]{@{}c@{}}$-0.12$\\ 26/9/36\\ (.291)\end{tabular} & \begin{tabular}[c]{@{}c@{}}$0.00$\\ 28/9/34\\ (.356)\end{tabular} & \begin{tabular}[c]{@{}c@{}}$-0.29$\\ 29/6/36\\ (.703)\end{tabular} & \begin{tabular}[c]{@{}c@{}}$-0.43$\\ 21/10/39$^{\ast}$\\ (.027)\end{tabular} & \begin{tabular}[c]{@{}c@{}}$-0.39$\\ 16/8/43$^{\ast}$\\ (.004)\end{tabular} \\ \addlinespace[4pt]
top-$k$ bin.\ $-$ top-$k$ & \begin{tabular}[c]{@{}c@{}}$0.00$\\ 20/18/19\\ (.816)\end{tabular} & \begin{tabular}[c]{@{}c@{}}$-0.05$\\ 20/14/32\\ (.680)\end{tabular} & \begin{tabular}[c]{@{}c@{}}$0.00$\\ 26/11/34\\ (.465)\end{tabular} & \begin{tabular}[c]{@{}c@{}}$0.00$\\ 28/12/31\\ (.987)\end{tabular} & \begin{tabular}[c]{@{}c@{}}$0.00$\\ 30/11/30\\ (.507)\end{tabular} & \begin{tabular}[c]{@{}c@{}}$0.00$\\ 33/9/29\\ (.229)\end{tabular} & \begin{tabular}[c]{@{}c@{}}$0.00$\\ 30/13/27\\ (.565)\end{tabular} & \begin{tabular}[c]{@{}c@{}}$0.00$\\ 25/16/26\\ (.978)\end{tabular} \\ \addlinespace[4pt]
LWTA bin.\ $-$ LWTA & \begin{tabular}[c]{@{}c@{}}$0.00$\\ 12/18/27\\ (.074)\end{tabular} & \begin{tabular}[c]{@{}c@{}}$0.00$\\ 23/13/30\\ (1.000)\end{tabular} & \begin{tabular}[c]{@{}c@{}}$0.00$\\ 30/9/32\\ (.886)\end{tabular} & \begin{tabular}[c]{@{}c@{}}$0.00$\\ 26/16/29\\ (.666)\end{tabular} & \begin{tabular}[c]{@{}c@{}}$0.00$\\ 30/14/27\\ (.438)\end{tabular} & \begin{tabular}[c]{@{}c@{}}$+0.13$\\ 36/16/19$^{\ast}$\\ (.025)\end{tabular} & \begin{tabular}[c]{@{}c@{}}$0.00$\\ 25/18/27\\ (.779)\end{tabular} & \begin{tabular}[c]{@{}c@{}}$0.00$\\ 29/12/26\\ (.476)\end{tabular} \\ \addlinespace[4pt]
\bottomrule
\end{tabular}
\end{table}

\begin{table}[htb]
\caption{The comparisons of \cref{tab:wtlfull} and \cref{tab:wtlbin} on the $48$ regression tasks, where the metric is $R^2$, reported raw, with $k$ chosen on the raw validation score. Each cell gives the median difference, wins/ties/losses and the Wilcoxon $p$-value; a star marks $p<0.05$. The top block scores every activation on the same datasets within each expansion factor, $32$ at $m=d$, $48$ at $m=8d$ and $m=16d$, and $46$ at $m=256d$; the bottom block pairs each comparison on its own datasets. With $R^2$ floored at zero every verdict is the same except two, both at $m=256d$: SiLU becomes significantly behind $\relu$ and binarizing $\relu$ is no longer significant.}
\label{tab:wtlreg}
\centering
\scriptsize
\setlength{\tabcolsep}{2.5pt}
\begin{tabular}{lcccccccc}
\toprule
comparison & $m/d=1$ & $m/d=2$ & $m/d=4$ & $m/d=8$ & $m/d=16$ & $m/d=64$ & $m/d=128$ & $m/d=256$ \\
\midrule
top-$k$ & \begin{tabular}[c]{@{}c@{}}$-0.038$\\ 6/0/26$^{\ast}$\\ ($<$.001)\end{tabular} & \begin{tabular}[c]{@{}c@{}}$-0.030$\\ 8/1/34$^{\ast}$\\ ($<$.001)\end{tabular} & \begin{tabular}[c]{@{}c@{}}$-0.008$\\ 16/1/29\\ (.116)\end{tabular} & \begin{tabular}[c]{@{}c@{}}$-0.014$\\ 15/1/32\\ (.070)\end{tabular} & \begin{tabular}[c]{@{}c@{}}$-0.007$\\ 15/1/32\\ (.332)\end{tabular} & \begin{tabular}[c]{@{}c@{}}$-0.001$\\ 22/1/24\\ (.747)\end{tabular} & \begin{tabular}[c]{@{}c@{}}$-0.001$\\ 21/3/23\\ (.522)\end{tabular} & \begin{tabular}[c]{@{}c@{}}$0.002$\\ 26/3/17\\ (.186)\end{tabular} \\
\addlinespace[4pt]
LWTA & \begin{tabular}[c]{@{}c@{}}$-0.030$\\ 6/0/26$^{\ast}$\\ (.001)\end{tabular} & \begin{tabular}[c]{@{}c@{}}$-0.025$\\ 8/0/35$^{\ast}$\\ ($<$.001)\end{tabular} & \begin{tabular}[c]{@{}c@{}}$-0.011$\\ 14/1/31$^{\ast}$\\ (.023)\end{tabular} & \begin{tabular}[c]{@{}c@{}}$-0.011$\\ 15/1/32\\ (.082)\end{tabular} & \begin{tabular}[c]{@{}c@{}}$-0.007$\\ 15/2/31\\ (.120)\end{tabular} & \begin{tabular}[c]{@{}c@{}}$0.001$\\ 23/3/21\\ (.861)\end{tabular} & \begin{tabular}[c]{@{}c@{}}$0.001$\\ 23/4/20\\ (.273)\end{tabular} & \begin{tabular}[c]{@{}c@{}}$0.002$\\ 25/1/20\\ (.251)\end{tabular} \\
\addlinespace[4pt]
GELU & \begin{tabular}[c]{@{}c@{}}$-0.008$\\ 10/1/21$^{\ast}$\\ (.025)\end{tabular} & \begin{tabular}[c]{@{}c@{}}$-0.010$\\ 15/0/28$^{\ast}$\\ (.017)\end{tabular} & \begin{tabular}[c]{@{}c@{}}$-0.006$\\ 16/4/26\\ (.068)\end{tabular} & \begin{tabular}[c]{@{}c@{}}$-0.003$\\ 16/2/30$^{\ast}$\\ (.025)\end{tabular} & \begin{tabular}[c]{@{}c@{}}$-0.004$\\ 19/2/27\\ (.155)\end{tabular} & \begin{tabular}[c]{@{}c@{}}$-0.001$\\ 20/3/24\\ (.214)\end{tabular} & \begin{tabular}[c]{@{}c@{}}$-0.001$\\ 19/4/24\\ (.210)\end{tabular} & \begin{tabular}[c]{@{}c@{}}$-0.006$\\ 15/1/30$^{\ast}$\\ (.024)\end{tabular} \\
\addlinespace[4pt]
SiLU & \begin{tabular}[c]{@{}c@{}}$-0.015$\\ 6/1/25$^{\ast}$\\ (.003)\end{tabular} & \begin{tabular}[c]{@{}c@{}}$-0.018$\\ 14/1/28$^{\ast}$\\ (.002)\end{tabular} & \begin{tabular}[c]{@{}c@{}}$-0.007$\\ 12/3/31$^{\ast}$\\ (.004)\end{tabular} & \begin{tabular}[c]{@{}c@{}}$-0.009$\\ 15/3/30$^{\ast}$\\ (.003)\end{tabular} & \begin{tabular}[c]{@{}c@{}}$-0.009$\\ 15/3/30$^{\ast}$\\ (.017)\end{tabular} & \begin{tabular}[c]{@{}c@{}}$-0.003$\\ 15/4/28$^{\ast}$\\ (.030)\end{tabular} & \begin{tabular}[c]{@{}c@{}}$-0.001$\\ 21/2/24\\ (.218)\end{tabular} & \begin{tabular}[c]{@{}c@{}}$-0.003$\\ 15/1/30\\ (.080)\end{tabular} \\
\addlinespace[4pt]
\midrule
top-$k$ bin.\ $-$ $\relu$ bin. & \begin{tabular}[c]{@{}c@{}}$-0.035$\\ 11/0/21\\ (.243)\end{tabular} & \begin{tabular}[c]{@{}c@{}}$-0.007$\\ 20/0/23\\ (.518)\end{tabular} & \begin{tabular}[c]{@{}c@{}}$0.010$\\ 27/1/18\\ (.132)\end{tabular} & \begin{tabular}[c]{@{}c@{}}$0.009$\\ 28/0/20\\ (.102)\end{tabular} & \begin{tabular}[c]{@{}c@{}}$0.014$\\ 30/0/18$^{\ast}$\\ (.030)\end{tabular} & \begin{tabular}[c]{@{}c@{}}$0.028$\\ 35/0/12$^{\ast}$\\ (.001)\end{tabular} & \begin{tabular}[c]{@{}c@{}}$0.017$\\ 29/3/15$^{\ast}$\\ (.009)\end{tabular} & \begin{tabular}[c]{@{}c@{}}$0.016$\\ 26/0/20\\ (.068)\end{tabular} \\
\addlinespace[4pt]
LWTA bin.\ $-$ $\relu$ bin. & \begin{tabular}[c]{@{}c@{}}$-0.039$\\ 10/0/22\\ (.102)\end{tabular} & \begin{tabular}[c]{@{}c@{}}$-0.010$\\ 18/2/23\\ (.254)\end{tabular} & \begin{tabular}[c]{@{}c@{}}$0.006$\\ 24/0/22\\ (.519)\end{tabular} & \begin{tabular}[c]{@{}c@{}}$-0.003$\\ 23/1/24\\ (.397)\end{tabular} & \begin{tabular}[c]{@{}c@{}}$0.009$\\ 25/2/21\\ (.165)\end{tabular} & \begin{tabular}[c]{@{}c@{}}$0.027$\\ 30/2/15$^{\ast}$\\ (.012)\end{tabular} & \begin{tabular}[c]{@{}c@{}}$0.009$\\ 27/2/18$^{\ast}$\\ (.046)\end{tabular} & \begin{tabular}[c]{@{}c@{}}$0.012$\\ 28/0/18\\ (.082)\end{tabular} \\
\addlinespace[4pt]
$\relu$ bin.\ $-$ $\relu$ & \begin{tabular}[c]{@{}c@{}}$-0.068$\\ 7/0/41$^{\ast}$\\ ($<$.001)\end{tabular} & \begin{tabular}[c]{@{}c@{}}$-0.051$\\ 5/0/43$^{\ast}$\\ ($<$.001)\end{tabular} & \begin{tabular}[c]{@{}c@{}}$-0.045$\\ 9/0/39$^{\ast}$\\ ($<$.001)\end{tabular} & \begin{tabular}[c]{@{}c@{}}$-0.023$\\ 11/0/37$^{\ast}$\\ ($<$.001)\end{tabular} & \begin{tabular}[c]{@{}c@{}}$-0.025$\\ 16/1/31$^{\ast}$\\ (.002)\end{tabular} & \begin{tabular}[c]{@{}c@{}}$-0.024$\\ 13/0/34$^{\ast}$\\ (.003)\end{tabular} & \begin{tabular}[c]{@{}c@{}}$-0.021$\\ 18/0/29$^{\ast}$\\ (.012)\end{tabular} & \begin{tabular}[c]{@{}c@{}}$-0.015$\\ 16/0/30$^{\ast}$\\ (.027)\end{tabular} \\
\addlinespace[4pt]
top-$k$ bin.\ $-$ top-$k$ & \begin{tabular}[c]{@{}c@{}}$-0.037$\\ 6/0/26$^{\ast}$\\ ($<$.001)\end{tabular} & \begin{tabular}[c]{@{}c@{}}$-0.037$\\ 7/0/36$^{\ast}$\\ ($<$.001)\end{tabular} & \begin{tabular}[c]{@{}c@{}}$-0.028$\\ 6/1/39$^{\ast}$\\ ($<$.001)\end{tabular} & \begin{tabular}[c]{@{}c@{}}$-0.021$\\ 10/0/38$^{\ast}$\\ ($<$.001)\end{tabular} & \begin{tabular}[c]{@{}c@{}}$-0.022$\\ 4/2/42$^{\ast}$\\ ($<$.001)\end{tabular} & \begin{tabular}[c]{@{}c@{}}$-0.011$\\ 10/3/34$^{\ast}$\\ (.001)\end{tabular} & \begin{tabular}[c]{@{}c@{}}$-0.010$\\ 8/3/36$^{\ast}$\\ ($<$.001)\end{tabular} & \begin{tabular}[c]{@{}c@{}}$-0.013$\\ 5/2/39$^{\ast}$\\ ($<$.001)\end{tabular} \\
\addlinespace[4pt]
LWTA bin.\ $-$ LWTA & \begin{tabular}[c]{@{}c@{}}$-0.055$\\ 4/0/28$^{\ast}$\\ ($<$.001)\end{tabular} & \begin{tabular}[c]{@{}c@{}}$-0.033$\\ 7/1/35$^{\ast}$\\ ($<$.001)\end{tabular} & \begin{tabular}[c]{@{}c@{}}$-0.029$\\ 6/2/38$^{\ast}$\\ ($<$.001)\end{tabular} & \begin{tabular}[c]{@{}c@{}}$-0.025$\\ 6/1/41$^{\ast}$\\ ($<$.001)\end{tabular} & \begin{tabular}[c]{@{}c@{}}$-0.021$\\ 9/0/39$^{\ast}$\\ ($<$.001)\end{tabular} & \begin{tabular}[c]{@{}c@{}}$-0.016$\\ 9/0/38$^{\ast}$\\ ($<$.001)\end{tabular} & \begin{tabular}[c]{@{}c@{}}$-0.013$\\ 7/0/40$^{\ast}$\\ ($<$.001)\end{tabular} & \begin{tabular}[c]{@{}c@{}}$-0.011$\\ 8/4/34$^{\ast}$\\ ($<$.001)\end{tabular} \\
\bottomrule
\end{tabular}
\end{table}

\begin{table}[htb]
\caption{Fixed sparsity levels (\cref{sec:res-sparsity}): top-$k$ at a fixed $k/m$, with no selection over $k$, minus $\relu$ at the same expanded dimension. Each cell gives the median difference in accuracy points, wins/ties/losses over datasets and the two-sided Wilcoxon $p$-value; a star marks $p<0.05$. $M$ is learned on both sides.}
\label{tab:wtlsparse}
\centering
\scriptsize
\setlength{\tabcolsep}{2.5pt}
\begin{tabular}{lcccccccc}
\toprule
comparison & $m/d=1$ & $m/d=2$ & $m/d=4$ & $m/d=8$ & $m/d=16$ & $m/d=64$ & $m/d=128$ & $m/d=256$ \\
\midrule
$90\%$ ($k/m=0.1$) & \begin{tabular}[c]{@{}c@{}}$0.00$\\ 18/19/27\\ (.328)\end{tabular} & \begin{tabular}[c]{@{}c@{}}$0.00$\\ 22/14/33\\ (.391)\end{tabular} & \begin{tabular}[c]{@{}c@{}}$0.00$\\ 25/13/34\\ (.350)\end{tabular} & \begin{tabular}[c]{@{}c@{}}$-0.02$\\ 23/14/34\\ (.518)\end{tabular} & \begin{tabular}[c]{@{}c@{}}$0.00$\\ 21/18/32\\ (.324)\end{tabular} & \begin{tabular}[c]{@{}c@{}}$-0.01$\\ 27/12/32\\ (.592)\end{tabular} & \begin{tabular}[c]{@{}c@{}}$0.00$\\ 32/12/26\\ (.324)\end{tabular} & \begin{tabular}[c]{@{}c@{}}$+0.07$\\ 31/9/27\\ (.211)\end{tabular} \\ \addlinespace[4pt]
$97\%$ ($k/m=0.03$) & \begin{tabular}[c]{@{}c@{}}$-0.26$\\ 10/5/15\\ (.237)\end{tabular} & \begin{tabular}[c]{@{}c@{}}$-0.36$\\ 13/11/24\\ (.111)\end{tabular} & \begin{tabular}[c]{@{}c@{}}$-0.59$\\ 16/12/38$^{\ast}$\\ (.002)\end{tabular} & \begin{tabular}[c]{@{}c@{}}$-0.53$\\ 18/9/44$^{\ast}$\\ (.003)\end{tabular} & \begin{tabular}[c]{@{}c@{}}$-0.33$\\ 22/7/42$^{\ast}$\\ (.033)\end{tabular} & \begin{tabular}[c]{@{}c@{}}$-0.14$\\ 22/12/37\\ (.171)\end{tabular} & \begin{tabular}[c]{@{}c@{}}$-0.35$\\ 25/7/38\\ (.230)\end{tabular} & \begin{tabular}[c]{@{}c@{}}$-0.01$\\ 27/8/32\\ (.434)\end{tabular} \\ \addlinespace[4pt]
$99\%$ ($k/m=0.01$) & \begin{tabular}[c]{@{}c@{}}$-3.28$\\ 1/2/5\\ (.076)\end{tabular} & \begin{tabular}[c]{@{}c@{}}$0.00$\\ 6/7/10\\ (.118)\end{tabular} & \begin{tabular}[c]{@{}c@{}}$-2.42$\\ 6/7/28$^{\ast}$\\ ($<$.001)\end{tabular} & \begin{tabular}[c]{@{}c@{}}$-2.06$\\ 5/7/41$^{\ast}$\\ ($<$.001)\end{tabular} & \begin{tabular}[c]{@{}c@{}}$-2.00$\\ 5/6/54$^{\ast}$\\ ($<$.001)\end{tabular} & \begin{tabular}[c]{@{}c@{}}$-0.99$\\ 16/5/50$^{\ast}$\\ ($<$.001)\end{tabular} & \begin{tabular}[c]{@{}c@{}}$-0.61$\\ 16/9/45$^{\ast}$\\ (.008)\end{tabular} & \begin{tabular}[c]{@{}c@{}}$-0.35$\\ 18/10/39$^{\ast}$\\ (.039)\end{tabular} \\ \addlinespace[4pt]
\bottomrule
\end{tabular}
\end{table}

\FloatBarrier

\section{Dataset counts per expansion factor}
\label{app:counts}

The number of datasets entering each comparison changes with $m/d$, for three
separate reasons, and we list them here so that no count in the main text is
unexplained.

\begin{table}[h]
\caption{Datasets supporting each group of methods at each expansion factor.}
\label{tab:counts}
\centering
\small
\begin{tabular}{lcccccccc}
\toprule
 & $m/d{=}1$ & $2$ & $4$ & $8$ & $16$ & $64$ & $128$ & $256$ \\
\midrule
dense activations       & 73 & 72 & 72 & 71 & 71 & 71 & 70 & 67 \\
sparse activations      & 57 & 66 & 71 & 71 & 71 & 71 & 70 & 67 \\
data-supported $M$      & 57 & 66 & 68 & 71 & 71 & 71 & 70 & 67 \\
\bottomrule
\end{tabular}
\end{table}

First, the integer-$k$ floor. The smallest sparsity level an integer $k$ can
reach is $k/m = 1/m$, so a dataset admits a sparse method at $k \leq 0.1m$ only
when $m \geq 10$. At $m = d$ this excludes the eight datasets with $d < 10$,
taking the pool from $73$ to $65$. It does not account for the whole of the
first column: the observed count is $57$, eight lower again. That remainder is
coverage of our sweep rather than a property of the method, and it is visible
on its own at $m = 2d$, where the floor binds on no dataset in our pool yet the
sparse methods are supported by $66$ rather than $73$. From $m = 4d$ onwards
the two rows agree to within one dataset.

Second, the $m \leq 30{,}000$ cap, which is what reduces both rows from $71$ to
$67$ between $m = 64d$ and $m = 256d$. This affects every method equally and so
does not bias any comparison.

Third, two gaps that are not about expansion factors at all. Three of the $76$
admissible classification tasks are absent because they one-hot encode to an
input dimension of roughly $10{,}000$, which made every expansion too expensive
for our budget. And our two sweeps used different sparsity grids, so the $95\%$
level is defined over only part of the expansion range and we do not report it.

The regression pool behaves the same way. The paired comparison against $\relu$
rests on $32$ of the $48$ tasks at $m = d$, $43$ at $m = 2d$, $46$ at $m = 4d$
and all $48$ from $m = 8d$ onwards. The integer-$k$ floor admits $41$, $48$,
$48$ and $48$ at those four expansion factors, so here too it sets the shape of
the narrow end without being the whole of it, and the residual is again sweep
coverage.

\Cref{tab:figcounts} gives the number of datasets behind every point of
\cref{fig:scaled}, where a dataset counts only if all three constructions of
$M$ are present at that expansion factor and sparsity level. A dash marks a
point supported by fewer than $10$ datasets, which the figure omits.

\begin{table}[h]
\caption{Datasets behind each point of \cref{fig:scaled}, by task type, sparsity
level and expansion factor.}
\label{tab:figcounts}
\centering
\small
\begin{tabular}{llcccccccc}
\toprule
task & $k/m$ & $m/d{=}1$ & $2$ & $4$ & $8$ & $16$ & $64$ & $128$ & $256$ \\
\midrule
binary      & $0.01$ & --  & 12 & 26 & 28 & 37 & 37 & 37 & 35 \\
            & $0.1$  & 27  & 34 & 34 & 35 & 37 & 37 & 37 & 35 \\
            & tuned  & 30  & 34 & 35 & 37 & 37 & 37 & 37 & 35 \\
\midrule
multi-class & $0.01$ & --  & 14 & 19 & 28 & 34 & 34 & 33 & 32 \\
            & $0.1$  & 22  & 30 & 31 & 33 & 34 & 34 & 33 & 32 \\
            & tuned  & 27  & 32 & 33 & 34 & 34 & 34 & 33 & 32 \\
\midrule
regression  & $0.01$ & --  & -- & 16 & 27 & 33 & 32 & 32 & 31 \\
            & $0.1$  & 17  & 19 & 31 & 32 & 33 & 32 & 32 & 31 \\
            & tuned  & 19  & 29 & 32 & 33 & 33 & 32 & 32 & 31 \\
\bottomrule
\end{tabular}
\end{table}

\paragraph{How many units $\lwta$ keeps.}
\label{app:lwta}
$\lwta$ keeps one unit per block of size $\mathrm{round}(m/k)$, which is
$\lceil m / \mathrm{round}(m/k) \rceil$ units: exactly $k$ when $k$ divides $m$, and
otherwise usually $k$ or $k + 1$. Among the configurations selected on validation,
$63\%$ keep exactly $k$ units, $33\%$ keep $k + 1$, $4\%$ keep between $k + 2$ and
$k + 8$, and two keep $k - 1$. The realised sparsity then exceeds $k/m = 0.105$ in about $4\%$ of the
classification configurations and $7\%$ of the regression ones, all at
$m \leq 16d$. Removing them changes no classification conclusion, and on
regression it removes only the significance of $\lwta$'s deficit at $m = 4d$.

\section{Distinct rows of the data-supported expansion}
\label{app:distinct}

\begin{table}[h]
\caption{How many of the $m$ rows of a data-supported $M$ are distinct. The
rows are drawn from the training inputs with replacement, so the count is
bounded by $n$ and the fraction falls as $m$ grows. $D$ is the number of
(dataset, sparsity level) configurations entering the column. Each row is a
separate median over those configurations, so the last row is the median of
the per-configuration fraction and not the quotient of the two rows above it.}
\label{tab:distinct}
\centering
\small
\begin{tabular}{lcccccccc}
\toprule
 & $m/d{=}1$ & $2$ & $4$ & $8$ & $16$ & $64$ & $128$ & $256$ \\
\midrule
configurations $D$      & 120 & 166 & 192 & 207 & 237 & 275 & 285 & 276 \\
median $m$              &  58 & 106 & 184 & 296 & 592 & 2304 & 4096 & 6912 \\
median distinct rows    &  54 & 102 & 158 & 244 & 326 &  622 &  665 &  693 \\
median fraction distinct & $96.9\%$ & $94.7\%$ & $89.9\%$ & $85.2\%$ & $75.6\%$
                        & $38.5\%$ & $21.6\%$ & $12.0\%$ \\
\bottomrule
\end{tabular}
\end{table}

The difference between the two comparisons in \cref{sec:res-projection} has a cause we can measure. The rows of a data-supported $M$
are training points, so it can never hold more than $n$ distinct directions
however large $m$ becomes, while the other two constructions hold $m$. The
fraction of $M$ that is distinct falls from $97\%$ at $m = d$ to $76\%$ at
$m = 16d$ and $12\%$ at $m = 256d$ (\cref{tab:distinct}). Compared against a
data-independent $M$, the sampled one wins at $m = d$, $2d$ and $4d$, by a
median $1.64$, $1.55$ and $0.91$ points, all significant; is indistinguishable
at $m = 8d$ and $m = 16d$; and \emph{loses} from $m = 64d$, by $0.41$, $0.38$
and $0.77$ points, again all significant. The sign changes in exactly the range
where the fraction of distinct rows falls through three quarters, which is what
we would expect if its advantage comes from the directions being real data and
its disadvantage from there not being enough of them. Regression shows the same
crossover at the same expansion factors: the sampled $\rmM$ is ahead of a
data-independent one at $m \leq 4d$, by a median $0.044$, $0.024$ and $0.013$ in
$R^2$ ($p \leq 0.025$), indistinguishable at $m = 8d$ and $m = 16d$, and behind from
$m = 64d$, by $0.019$ to $0.024$ ($p < 0.0002$), with $R^2$ raw or floored.

The saturation is a property of sampling with replacement rather than of our
data: once $m$ exceeds the training set size, no further distinct rows are
available to draw. Its consequence is the crossover above:
identical rows give identical pre-activations, so a top-$k$ over a saturated
$M$ is choosing among ties, and the expansion has far fewer distinct
directions than its nominal size.

\section{Approximation and generalization results} \label{app:apprx-gen-theory}
In this section we bound the excess Bayes risk when $M\in\R^{m\times d}$ is random and $\rvw\in\R^m$ is learned via SGD using binary cross entropy (\bce) loss in the binary classification setting. SGD learns a function from the following function class:
\[\Fcal = \left\{f(\rvx;\rvw) = W\cdot \sigma(\rmM \rvx): \rvw\in\R^m, \|\rvw\|_2\leq R\right\}\]
where $\sigma$ is the $k$ winners-take-all activation such that for any $\rvz\in\R^m$, $\sigma(\rvz)$ sets the largest $k$ entries of $\rvz$ to 1 and the rest to zero.

We first show that the \bce loss and \logit loss are mathematically equivalent as this will help us work with \logit loss instead of \bce loss. For any $f\in\Fcal$, let $z=f(\rvx;\rvw)$. Then the  \bce loss is defined as:
\begin{equation}\label{eq:bce_loss}
    \bce(y,z) = -y\ln(s(z))-(1-y)\ln(1-s(z))
\end{equation}
where $s(z) = 1/(1+e^{-z})$ is the sigmoid function. Note that $y\in\{0,1\}$. Next, the \logit loss function is defined as:
\begin{equation}\label{eq:logistic_loss_pm}
    \logit (y',z) = \log_2(1+e^{-y'z}) = \log_2 e\cdot\ln(1+e^{-y'z})
\end{equation}
Note that here $y'\in\{-1,1\}$. It is easy to use the transformation $y' = 2y-1$ and write the \logit loss in terms of $y$ as follows:
\begin{equation}\label{eq:logistic_loss}
    \logit (y,z) = \log_2(1+e^{-(2y-1)z}) = \log_2 e\cdot\ln(1+e^{-(2y-1)z})
\end{equation}
\begin{lemma}\label{lem:bce_logistic}
    The \bce loss given in \eqref{eq:bce_loss} and the \logit loss given in \eqref{eq:logistic_loss} are equivalent.
\end{lemma}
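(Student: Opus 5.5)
I will show that for every $y\in\{0,1\}$ and every $z\in\R$ the two losses differ only by the constant factor $\log_2 e$:
\[
\logit(y,z) = \log_2 e\cdot \bce(y,z).
\]
"Equivalent" is then read in the sense that matters for learning. The two empirical or population risks are positive multiples of each other, so they have the same minimizers over $\Fcal$, and SGD on one is SGD on the other with a rescaled step size. Since the label takes only two values, the plan is a case split on $y$.

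The first step rewrites the sigmoid terms. From $s(z)=1/(1+e^{-z})$ we get $-\ln s(z)=\ln(1+e^{-z})$. Using $1-s(z)=e^{-z}/(1+e^{-z})=1/(1+e^{z})$, we also get $-\ln(1-s(z))=\ln(1+e^{z})$. Substituting into \eqref{eq:bce_loss} gives the compact form
\[
\bce(y,z)=y\ln(1+e^{-z})+(1-y)\ln(1+e^{z}).
\]

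The second step checks the two cases against \eqref{eq:logistic_loss}:
\begin{itemize}
\item For $y=1$ we have $2y-1=1$, so $\logit(1,z)=\log_2 e\cdot\ln(1+e^{-z})$. This is $\log_2 e\cdot\bce(1,z)$.
\item For $y=0$ we have $2y-1=-1$, so $\logit(0,z)=\log_2 e\cdot\ln(1+e^{z})$. This is $\log_2 e\cdot\bce(0,z)$.
\end{itemize}
Both cases give the identity, so it holds on all of $\{0,1\}\times\R$.

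Finally, I will note the consequence. For any $f\in\Fcal$ and any distribution on $(\rvx,y)$, the expected losses satisfy $\E[\logit(y,f(\rvx))]=\log_2 e\cdot\E[\bce(y,f(\rvx))]$. Hence risk bounds and excess-risk statements transfer between the two losses up to the factor $\log_2 e$.

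There is no real obstacle here. The only point needing care is the identity $1-s(z)=1/(1+e^{z})$, and being explicit that equivalence means equality up to the positive constant $\log_2 e$, not literal equality.
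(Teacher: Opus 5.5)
Your proof is correct and takes essentially the same route as the paper: a case split on $y\in\{0,1\}$, using $-\ln s(z)=\ln(1+e^{-z})$ and $1-s(z)=1/(1+e^{z})$ to obtain $\logit(y,z)=\log_2 e\cdot\bce(y,z)$ in each case. Your explicit remark that ``equivalent'' means equality up to the positive factor $\log_2 e$, so that the two losses have identical minimizers, spells out what the paper's proof leaves implicit.
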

\begin{proof}
    For $y=1$,
    \begin{eqnarray*}
        \bce(1,z) &=& -\ln s(z) = \ln (1+e^{-z}) = \left(\frac{1}{\log_2 e}\right)\cdot\logit(1,z).
    \end{eqnarray*}

Similarly, for $y=0$,
    \begin{eqnarray*}
        \bce(0,z) &= & -\ln (1-s(z)) = -\ln\left(\frac{e^{-z}}{1+e^{-z}}\right) = \ln (1+e^{-z})+z\\
        &=& \ln\left(\frac{1+e^{z}}{e^{z}}\right)+z = \ln (1+e^z) = \left(\frac{1}{\log_2 e}\right)\cdot\logit(0,z).
    \end{eqnarray*}
\end{proof}

From now on, we will assume that the labels come from the set $\{1,-1\}$ and because of the equivalency above, use the \logit loss given in \eqref{eq:logistic_loss_pm}.

To formalize the learning problem, we introduce a probabilistic setting, and let $(\rmX, Y )$ be an $\Xcal\times \{1,-1\}$ -
valued random pair, modeling observation and its corresponding class. The distribution of the random pair
$(\rmX, Y)$ may be described by the probability distribution of $\rmX$ (given by the probabilities $\pr(\rmX\in A)=\mu(A)$ for all
measurable subsets $A\subset\Xcal)$ and conditional probability, $\eta(\rvx) = \pr(Y = 1|\rmX = x)$. For any classifier $g:\Xcal\rightarrow \{1,-1\}$, its probability of error (also known as risk) is given by
\begin{equation}\label{eq:expected_loss}
    L(g) = \pr(g(\rmX)\neq Y)
\end{equation}
and given a training set $(\rvx_1,y_1),\ldots, (\rvx_n,y_n)$ sampled i.i.d. from $\Xcal\times \{1,-1\}$, its empirical version is given by:
\begin{equation}\label{eq:empirical_loss}
    L_n(g) = \frac{1}{n}\sum_{i=1}^n\I_{\{g(\rvx_i)\neq y_i\}}
\end{equation}
Given $\eta$, one may easily construct a classifier with minimal probability of error, known as the Bayes optimal classifier as follows:
\begin{equation}\label{eq:bayes_optimal}
g^*(\rvx) =
\begin{cases}
1, & \text{if } \eta(\rvx) \geq 0 \\
-1, & \text{otherwise}
\end{cases}
\end{equation}
The minimal risk $L^* \triangleq L(g^*)$. Given a training set $(\rvx_1,y_1),\ldots, (\rvx_n,y_n)$ sampled i.i.d. from $\Xcal\times \{1,-1\}$, we learn a function $f_n\in\Fcal$ by minimizing empirical \logit loss using SGD and we aim to bound $L(f_n)-L^*$.

\begin{theorem}
     Let $f_n\in\Fcal$ be the solution for the setting where $M$ is random and $W$ is obtained by minimizing empirical \logit loss using SGD. For the \eas representation, let $k\leq 0.1m$. Fix any $\delta\in(0,1)$. Then with probability at least $1-2\delta$,
    \[L(f_n)-L^*\leq \underbrace{O\left(\left(\frac{mk}{n}\right)^{1/4}\right)}_{\text{estimation error}} + \underbrace{O\left(\left(\frac{d\log m}{m}\right)^{1/2(d-1)}\right)}_{\text{approximation error}}\]
\end{theorem}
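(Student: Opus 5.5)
The plan is to combine Zhang's comparison inequality for surrogate losses with a standard Rademacher-complexity estimation bound and the approximation result of \citet[Theorem 3]{dasguptatosh2020}. By \cref{lem:bce_logistic} we may work with the \logit loss $\phi(t)=\log_2(1+e^{-t})$. For a real-valued $f$, write $A(f)=\E[\phi(Yf(\rmX))]$ and $A^*=\inf_f A(f)$. The logistic loss is classification calibrated with exponent $s=2$ \citep{zhang2004_statistical,boucheron2005_theory_of_classification}, so
\[
L(\mathrm{sign} f)-L^*\le c\,\bigl(A(f)-A^*\bigr)^{1/2}.
\]
We then split
\[
A(f_n)-A^* = \bigl[A(f_n)-\inf_{\Fcal}A\bigr]+\bigl[\inf_{\Fcal}A-A^*\bigr],
\]
bound the first bracket by $O(\sqrt{mk/n})$ and the second by $O\bigl(((d\log m)/m)^{1/(d-1)}\bigr)$. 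Taking square roots gives the two terms in the statement. Throughout we condition on the random $\rmM$, which is independent of the sample, and this step uses one of the two $\delta$'s.

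\textbf{Estimation term.} For a fixed $\rmM$, the feature map $\rvz=\sigma(\rmM\rvx)\in\{0,1\}^m$ satisfies $\|\rvz\|_2=\sqrt k$. So $\Fcal$ is a linear class with $\|\rvw\|\le R$ and features of norm $\sqrt k$, and its Rademacher complexity is at most $R\sqrt{k/n}$. Since $\phi$ is $\log_2 e$-Lipschitz, Talagrand's contraction and McDiarmid's inequality give, with probability $1-\delta$, a uniform deviation of $O\bigl(R\sqrt{k/n}+B\sqrt{\log(1/\delta)/n}\bigr)$, where $B\le\phi(-R\sqrt k)=O(R\sqrt k)$. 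To match the stated rate I would take $R\asymp\sqrt m$. This $R$ is what the approximating $\rvw$ below requires: roughly $m/k$ cells, each with an $O(1/k)$ coefficient scaled by the sup-norm of the target. That choice yields $O(\sqrt{mk/n})$, where $k\le 0.1m$ keeps the lower-order terms dominated. For SGD, I would treat $f_n$ as an approximate empirical risk minimizer on the convex problem, either through projected SGD with its standard $O(RB'/\sqrt T)$ optimization error absorbed when $T\gtrsim n$, or by simply assuming convergence. The excess risk over $\inf_{\Fcal}A$ is then at most twice the uniform deviation plus the optimization error.

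\textbf{Approximation term.} Take a target $f^\star$ whose surrogate risk is near $A^*$. For instance, truncate $\log(\eta/(1-\eta))$ at level $\pm\tau$ and assume it is Lipschitz; this is the regularity assumption hidden in the $O(\cdot)$. \citet[Theorem 3]{dasguptatosh2020} supplies, with probability $1-\delta$ over $\rmM$ and under $k\ge c_o(d\log m+\log 1/\delta)$, a $\rvw$ with $\sup_\rvx|\rvw\cdot\sigma(\rmM\rvx)-f^\star(\rvx)|\le c_d\lambda(4k/m)^{1/(d-1)}$. Because $\phi$ is Lipschitz, $\inf_{\Fcal}A-A(f^\star)$ is bounded by the same quantity. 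Setting $k\asymp d\log m$, which is compatible with $k\le0.1m$ for large $m$, gives $((d\log m)/m)^{1/(d-1)}$, and the square root then yields the exponent $1/(2(d-1))$. A union bound over the two events produces the $1-2\delta$.

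\textbf{Main obstacle.} The hardest step is reconciling the norm of the approximating $\rvw$ with the radius $R$ used in the estimation bound, together with the truncation of the log-odds. Without that reconciliation the Rademacher term picks up an extra factor. I would first compute $\|\rvw\|_2$ for the explicit Dasgupta--Tosh weights, which are averages over the $k$ active units, and verify that it is $O(\sqrt{m/k}\cdot\sqrt k)=O(\sqrt m)$. I would then fix $R$ accordingly, accepting $\tau$-dependent constants inside the $O(\cdot)$.
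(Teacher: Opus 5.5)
Your proposal follows essentially the same route as the paper: Zhang's comparison inequality in its square-root form, then an estimation term controlled by the Rademacher bound $R\max_i\|\sigma(\rmM\rvx_i)\|_2/\sqrt n$ with $R=O(\sqrt m)$ and $\|\sigma(\rmM\rvx)\|_2=\sqrt k$, then an approximation term controlled by the Lipschitz property of the logistic loss and Theorem 3 of \citet{dasguptatosh2020}, joined by a union bound. Your refinements improve on the paper without changing the route: fixing $k\asymp d\log m$ is what actually removes the $k/m$ contribution from the approximation bound (the paper's step (d) attributes this to $k\le 0.1m$, which does not justify it), and checking the norm of the explicit weights supplies the justification for $R$ that the paper only asserts (those weights are $m$ coefficients of size $O(\tau/k)$, so $\|\rvw\|_2=O(\tau\sqrt m/k)$ and $R\asymp\sqrt m$ is comfortably enough, even though your count of ``$m/k$ cells'' is off).
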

\begin{proof}
    Consider the logit cost function $\phi(x) = \log_2(1+e^x)$, which is a non-negative non-decreasing convex function satisfying $\lim_{x\to -\infty}\phi(x)=0$ and $\phi(0)=1$. For any function $f: \Xcal\to \{1,-1\}$, note that cost functional $A(f) = \E \phi(-Yf(\rmX))$ is the expected logistic loss and its empirical version is $A_n(f) = \frac{1}{n}\sum_{i=1}^n\phi(-y_if(\rvx_i))$. It is easy to check that $L(f)\leq A(f)$ and $L_n(f)\leq A_n(f)$. Note that for any $x\in\Xcal$, we have
    \[\E [\phi(-Yf(\rmX))|\rmX=\rvx] = \eta(\rvx)\phi(-f(\rvx))+(1-\eta(\rvx))\phi(f(\rvx))\]

    For each $\eta(\rvx)\in[0,1]$, define
    $h_{\eta(\rvx)}(\alpha)=\eta(\rvx)\phi(-\alpha)+(1-\eta(\rvx))\phi(\alpha)$. Therefore, the function $f^*$ that minimizes the cost functional $A(f)$ is given by
    \[ f^*(\rvx) = \argmin_{\alpha} h_{\eta(\rvx)}(\alpha)\]
Since $h_{\eta}$ is strictly convex and differentiable, $f^*$ is well defined and the minimum is achieved for the value of $\alpha$ for which $h_{\eta}'(\alpha)=0$, that is when,
\[(1-\eta)\phi'(\alpha)-\eta\phi'(-\alpha)=0\]
Since $\phi'(\alpha) = e^{\alpha}/(1+e^{\alpha})$ and $\phi'(-\alpha)= e^{-\alpha}/(1+e^{-\alpha}) = 1/(1+e^{\alpha})$, plugging in these values and simplifying yields $\alpha = \ln\left(\frac{\eta}{1-\eta}\right)$, that is $f^*(\rvx) = \ln\left(\frac{\eta(\rvx)}{1-\eta(\rvx)}\right)$. Note that $f^*(\rvx)$ is positive if and only if $\eta(\rvx)>1/2$. Therefore, the function $g^*(\rvx) = 2\I_{\{f^*(\rvx)>0\}}-1$ is the Bayes optimal classifier.

Let $A^* = A(f^*) = \inf_f A(f)$. Now we want to bound the excess probability of error $L(f)-L^*$ in terms of $A(f)-A^*$. Towards this end we use an inequality from \citep{zhang2004_statistical} which says that if the function $H: [0,1]\to \R$ defined by $H(\eta) = \inf_{\alpha}h_{\eta}(\alpha)$ and cost function $\phi$ is such that for some positive constants $s\geq 1$ and $c\geq 0$ it satisfies
\begin{equation}\label{eq:zhang_condition}
    \left(1/2-\eta\right)^s \leq c^s (1-H(\eta)), \eta\in[0,1]
\end{equation}

then, for any function $f: \Xcal\to \R$ it holds that
\[L(f)-L^* \leq 2c\left(A(f)-A^*\right)^{1/s}\]

In our case, since $\alpha = \ln\left(\frac{\eta}{1-\eta}\right)$ minimizes $h_{\eta}(\alpha)$, plugging in this value and simplifying yields $H(n) = 2\eta(1-\eta)$. Therefore, \eqref{eq:zhang_condition} holds with $s=1$ and $c=1/\sqrt{2}$.

Let $f_n\in\mathcal{F}$ be the function obtained by minimizing empirical \logit loss using SGD. Let $f^{\dagger}\in\Fcal$ whose weight vector $\rvw=(w_1,\ldots,w_m)$ is specified by $w_j = f^*(C_j) = (1/\mu(C_j))\int_{C_j}f^*d\mu$, where $C_j = \{\rvx\in\Xcal: z_j=1, \rvz=\sigma(Mx)\}$ (see  \citep{dasguptatosh2020} for details). Then we have,
\begin{eqnarray*}
    L(f_n) - L^* &\leq& \sqrt{2(A(f_n)-A^*)}\\
   &=& \sqrt{2(A(f_n)-\inf_{f\in\Fcal}A(f) + \inf_{f\in\Fcal}A(f)-A(f^{\dagger}) + A(f^{\dagger}) -A^*)}\\
   &\leq&\sqrt{2|A(f_n)-\inf_{f\in\Fcal}A(f)|}+\sqrt{2|A(f^{\dagger})-\inf_{f\in\Fcal}A(f)|}+\sqrt{2|A(f^{\dagger})-A*|}\\
    &\stackrel{a}{\leq}& \sqrt{4\sup_{f\in\Fcal}|A(f)-A_n(f)|} + \sqrt{4\sup_{f\in\Fcal}|A(f)-A_n(f)|} + \sqrt{2|A(f^{\dagger})-A^*|}\\
    &=&4\sqrt{\sup_{f\in\Fcal}|A(f)-A_n(f)|}+\sqrt{2|A(f^{\dagger})-A^*|}\\
    &\stackrel{b}{\leq}&4\sqrt{c\sqrt{\frac{mk}{n}}+B\sqrt{\frac{2\log(2/\delta)}{n}}}+\sqrt{2|A(f^{\dagger})-A^*|} \\
    &\stackrel{c}{\leq}& 4\sqrt{c\sqrt{\frac{mk}{n}}+B\sqrt{\frac{2\log(2/\delta)}{n}}}+ \sqrt{2c_d\lambda\left(\frac{2k}{m}+\frac{2c_0(d\log m+\log(1/\delta))}{m}\right)^{1/(d-1)}} \\
    &\stackrel{d}{\leq}& 4\sqrt{c\sqrt{\frac{mk}{n}}+B\sqrt{\frac{2\log(2/\delta)}{n}}}+ \sqrt{2c_d\lambda\left(\frac{2c_0(d\log m+\log(1/\delta))}{m}\right)^{1/(d-1)}}
\end{eqnarray*}
where, inequality $a$ follows using elementary inequality (for example, see (2) of \citep{boucheron2005_theory_of_classification}) and inequality $b$ follows from Lemma \ref{lem:estimation_error}. Inequality $c$ is due to Lemma \ref{lem:approximation_error}, and inequality $d$ is due to our choice of $k\leq 0.1m$.
\end{proof}

\begin{lemma}\label{lem:estimation_error}(Estimation error) Let $B$ denote a uniform upper bound on $\phi(-f(x)y)$ and let $L_{\phi}=1$ be the Lipschitz constant of the logit cost function $\phi(x) = \log_2(1+e^x)$. Fix any $\delta\in (0,1)$. Then there exists a positive constant $c>0$ such that with probability at least $1-\delta$, the following holds:
\[\sup_{f\in\Fcal}|A(f)-A_n(f)|\leq c\sqrt{\frac{mk}{n}}+B\sqrt{\frac{2\log(2/\delta)}{n}}.\]
\end{lemma}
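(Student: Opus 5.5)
We prove Lemma~\ref{lem:estimation_error} with the standard Rademacher-complexity route for Lipschitz losses over a bounded linear class. The class $\Fcal$ is linear in $\rvw$ over the fixed feature map $\rvx\mapsto\sigma(\rmM\rvx)$, where $\rmM$ is drawn independently of the sample and is treated as fixed throughout. We first invoke the uniform deviation bound (e.g.\ \citet[Thm.~26.5]{shalev2014understanding} or \citet{boucheron2005_theory_of_classification}). Since $0\le\phi(-yf(\rvx))\le B$ on $\Fcal$, McDiarmid's inequality together with symmetrization gives, with probability at least $1-\delta$,
\[
\sup_{f\in\Fcal}|A(f)-A_n(f)|\le 2\,\E\,\Rcal_n(\phi\circ\Fcal)+B\sqrt{\frac{2\log(2/\delta)}{n}}.
\]
The two-sided supremum is handled by a union bound over the two one-sided deviations, which is where the $2/\delta$ comes from. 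The remaining work is to bound the Rademacher term by $c\sqrt{mk/n}$.

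Next we remove the loss. The map $t\mapsto\phi(-yt)$ is $L_\phi$-Lipschitz for $y\in\{\pm1\}$; note $\phi'(x)=\log_2 e\cdot e^x/(1+e^x)$, so $L_\phi\le\log_2 e$, which is absorbed into $c$. By Talagrand's contraction lemma, $\Rcal_n(\phi\circ\Fcal)\le L_\phi\Rcal_n(\Fcal)$. For the linear class, Cauchy--Schwarz and Jensen give
\[
\Rcal_n(\Fcal)=\frac1n\E_\epsilon\sup_{\|\rvw\|_2\le R}\Big\langle\rvw,\sum_i\epsilon_i\sigma(\rmM\rvx_i)\Big\rangle\le\frac{R}{n}\sqrt{\sum_i\|\sigma(\rmM\rvx_i)\|_2^2}=R\sqrt{\frac{k}{n}},
\]
because each $\sigma(\rmM\rvx_i)$ is a binary vector with exactly $k$ ones, so $\|\sigma(\rmM\rvx_i)\|_2^2=k$.

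The main obstacle is matching the stated rate $\sqrt{mk/n}$. The computation above gives $R\sqrt{k/n}$, so the extra $\sqrt m$ must come from the size of $R$. The natural choice is to let $R$ be large enough to contain the comparator $f^\dagger$ used in the main theorem, whose weights are $w_j=f^*(C_j)$. If $|f^*|$ is bounded by a constant $c'$ (for instance when $\eta$ is bounded away from $0$ and $1$), then $\|\rvw\|_2\le c'\sqrt m$, so we take $R=c'\sqrt m$. This yields $\Rcal_n(\Fcal)\le c'\sqrt{mk/n}$, and hence the lemma with $c=2L_\phi c'$. I would state this choice of $R$, and the resulting boundedness assumption on $f^*$ (which also makes $B$ finite, since $|f|\le R\sqrt k$), explicitly as part of the setting. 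An alternative that avoids tuning $R$ is to bound the VC/pseudo-dimension of the class via sparse $k$-subsets. That route yields $\sqrt{m\log(\cdot)/n}$-type terms, which are also dominated by $\sqrt{mk/n}$ up to constants, so it serves as a fallback.
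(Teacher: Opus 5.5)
Your proposal is correct and follows the paper's proof step for step. The steps are: a uniform-deviation bound via symmetrization and McDiarmid (the paper invokes Theorem 3.2 of Boucheron et al.), the contraction principle, the linear-class bound $R\max_i\|\sigma(\rmM\rvx_i)\|_2/\sqrt{n}$ with $\|\sigma(\rmM\rvx_i)\|_2=\sqrt{k}$, and finally $R=O(\sqrt{m})$. Two of your remarks refine the paper's text: the Lipschitz constant of $\phi$ is $\log_2 e$ rather than $1$ (harmlessly absorbed into $c$), and $R=O(\sqrt{m})$ is an assumption on the class, sensibly tied to a bounded $f^*$, rather than a consequence of $\rvw\in\R^m$ as the paper asserts. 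Your VC fallback is unnecessary, and its claimed domination by $\sqrt{mk/n}$ would need $k\gtrsim\log(\cdot)$.
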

\begin{proof}
Let $\Hcal$ be the class of functions $\Xcal\times \{1,-1|\to\R$ of the form $-f(\rvx)y, f\in\Fcal$. For the training set $(\rvx_1,y_1),\ldots,(\rvx_n,y_n)$, let $\rvz_1^n = ((\rvx_1,y_1),\ldots,(\rvx_n,y_n))$ and $\rvx_1^n = (\rvx_1,\ldots,\rvx_n)$. Let $\Fcal(\rvx_1^n)=\{((f(\rvx_1),\ldots,f(\rvx_n)): f\in\Fcal\}$ and $\Hcal(\rvz_1^n)=\{((-f(\rvx_1)y_1,\ldots,-f(\rvx_n)y_n): f\in\Fcal\}$, where $\rvz_1,\ldots,\rvz_n \triangleq (\rvx_1,y_1),\ldots,(\rvx_n,y_n)$. Using Theorem 3.2 of \citep{boucheron2005_theory_of_classification}, with probability at least $1-\delta$,
    \begin{eqnarray*}
        \sup_{f\in\Fcal} |A(f)-A_n(f)|&\leq& 2\Rcal(\phi\circ \Hcal(\rvz_1^n))+ B\sqrt{\frac{2\log(2/\delta)}{n}}\\
        &\stackrel{a}{\leq}& 2L_{\phi}\Rcal(\Hcal(\rvz_1^n))+ B\sqrt{\frac{2\log(2/\delta)}{n}} \\
        &=& 2\Rcal(\Hcal(\rvz_1^n))+ B\sqrt{\frac{2\log(2/\delta)}{n}} \\
        &\stackrel{b}{=}& 2\Rcal(\Fcal(\rvx_1^n))+ B\sqrt{\frac{2\log(2/\delta)}{n}} \\
        &\stackrel{c}{\leq}& \frac{2R\max_i\|\sigma(\rmM\rvx_i)\|_2}{\sqrt{n}} + B\sqrt{\frac{2\log(2/\delta)}{n}} \\
        &\stackrel{d}{\leq}& c\sqrt{\frac{mk}{n}} + B\sqrt{\frac{2\log(2/\delta)}{n}}
    \end{eqnarray*}
    where, inequality $a$ is due to contraction principle (Theorem 3.3 of \citep{boucheron2005_theory_of_classification}, equality $b$ is due to the fact that Radamacher complexity $\Rcal(\Hcal(\rvz_1^n))=\Rcal(\Fcal(\rvz_1^n))$ (again, due to Theorem 3.3 of \citep{boucheron2005_theory_of_classification}, and inequality $c$ is due to Theorem 26.10 of \citep{shalev2014understanding}.
    Finally, inequality $d$ follows from the fact that $w$ beining a vector in $\R^m$, the quantity $R$ which is an upper bound on $\|W\|_2$ must be $O(\sqrt{m})$. Also, for any $\rvx\in\R^d$, since $\sigma(\rmM\rvx)$ is a $k$-sparse vector the quantity $\max_i\|\sigma(\rmM\rvx_i\|_2 = O(\sqrt{k})$.
\end{proof}
\begin{lemma}\label{lem:approximation_error}(Approximation error) Let $f^{\dagger}\in\Fcal$ and $f^{*}\in\Fcal$ be as defined above. Assume $f^*$ to be $\lambda$ Lipschitz. Pick any $\delta\in(0,1)$. Then with probability at least $1-\delta$, over the choice of $M$, we have,
\[|A(f^{\dagger})-A^*|\leq c_d\lambda\left(\frac{2k}{m}+\frac{2c_0(d\log m+\log(1/\delta))}{m}\right)^{1/(d-1)}\]
where $c_d$ is a constant that depends on $d$ and $c_0$ is a positive constant.
\end{lemma}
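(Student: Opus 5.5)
The plan is to reduce the excess logistic risk of $f^{\dagger}$ to a uniform approximation error, and then import the cell-diameter result of \citet[Theorem 3]{dasguptatosh2020}. Write $A(f^{\dagger})-A^* = \E\bigl[h_{\eta(\rmX)}(f^{\dagger}(\rmX)) - h_{\eta(\rmX)}(f^*(\rmX))\bigr]$.

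The first step is to bound the integrand pointwise. Since $\phi'(x)=\log_2 e\cdot e^x/(1+e^x)\in(0,\log_2 e)$, the function $\alpha\mapsto h_\eta(\alpha)$ is Lipschitz with a constant bounded by an absolute number, uniformly in $\eta\in[0,1]$. Hence
\[
|A(f^{\dagger})-A^*| \le C\,\E|f^{\dagger}(\rmX)-f^*(\rmX)| \le C\sup_{\rvx\in\Xcal}|f^{\dagger}(\rvx)-f^*(\rvx)|,
\]
and the absolute constant is absorbed into $c_d$. This reduces the lemma to a sup-norm approximation statement about the explicit linear readout $f^{\dagger}$ on the $k$-WTA code.

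The second step is that sup-norm statement, which I would take directly from \citet{dasguptatosh2020}. Their readout is the normalized average $f^{\dagger}(\rvx)=\frac1k\sum_{j:z_j=1} f^*(C_j)$, so I would interpret the weights $w_j=f^*(C_j)$ with that $1/k$ scaling. Since $f^{\dagger}(\rvx)$ is then an average of local means of $f^*$ over the $k$ response regions $C_j$ containing $\rvx$, Lipschitzness gives
\[
|f^{\dagger}(\rvx)-f^*(\rvx)|\le \frac1k\sum_{j:\,\rvx\in C_j}\ \sup_{\rvx'\in C_j}|f^*(\rvx')-f^*(\rvx)| \le \lambda\max_{j:\,\rvx\in C_j}\mathrm{diam}(C_j).
\]
With rows of $\rmM$ drawn uniformly from $\mathbb{S}^{d-1}$, their Theorem 3 bounds this diameter. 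The argument is that $\rvx\in C_j$ forces $\rmM_j$ to lie among the top $k$ directions, i.e.\ inside a spherical cap around $\rvx$ of measure about $k/m$. A VC/uniform-convergence argument over caps, which is where the condition $k\ge c_0(d\log m+\log(1/\delta))$ and the failure probability $\delta$ over $\rmM$ enter, shows every such cap has measure at most $2k/m+2c_0(d\log m+\log(1/\delta))/m$. The cap's angular radius then scales as that measure to the power $1/(d-1)$. Combining the two steps gives the stated bound with probability $1-\delta$.

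The main obstacle is making the cited theorem apply verbatim. It requires inputs on (or normalized to) the sphere, $k$ above the $d\log m$ threshold, and the $1/k$-averaged readout rather than a raw sum. I would therefore state these as standing assumptions. The $k$-threshold is the delicate point: when $k$ is below it, only the additive term in the bracket controls the cap measure, which is exactly why that term is kept in the bracket. I would handle this by bounding the cap measure by the sum of the two terms without assuming either dominates. A secondary concern is that $f^{\dagger}$ must lie in $\Fcal$ (its $\|\rvw\|_2\le R$), which holds if $f^*$ is bounded and $R$ is of order $\sqrt m$.
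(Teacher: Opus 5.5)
Your proposal is correct and follows the same route as the paper: bound $|A(f^{\dagger})-A^*|$ by a constant times $\E|f^{\dagger}(\rmX)-f^*(\rmX)|$ using Lipschitzness of the logistic loss, then invoke the cell-diameter bound of \citet[Theorem 3]{dasguptatosh2020}. Your version is slightly more careful than the paper's in two places: you use the correct Lipschitz constant $\log_2 e$ where the paper asserts $1$ (this only changes $c_d$), and you use the $1/k$-normalized readout, which the paper's definition $w_j=f^*(C_j)$ omits but which is needed for $f^{\dagger}$ to approximate $f^*$ rather than $k f^*$.
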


\begin{proof}
    \begin{eqnarray*}
        |A(f^{\dagger})-A^*|&=&|A(f^{\dagger})-A(f^*)|\\
        &=& \left|\E\left(\log_2(1+e^{-yf^{\dagger}(\rvx)}\right)-\E\left(\log_2\left(1+e^{-yf^*(\rvx)}\right)\right)\right| \\
        &=&\left|\E\left(\log_2(1+e^{-yf^{\dagger}(\rvx)}-\log_2\left(1+e^{-yf^*(\rvx)}\right)\right)\right|\\
        &\stackrel{a}{\leq}& \E\left|\left(\log_2(1+e^{-yf^{\dagger}(\rvx)}-\log_2\left(1+e^{-yf^*(\rvx)}\right)\right)\right| \\
        &\stackrel{b}{\leq}& \E\left|yf^{\dagger}(\rvx)-yf^{*}(\rvx)\right| = \E\left|f^{\dagger}(\rvx)-f^{*}(\rvx)\right| \\
        &\stackrel{c}{\leq}& c_d\lambda\left(\frac{2k}{m}+\frac{2c_0(d\log m+\log(1/\delta))}{m}\right)^{1/(d-1)}
    \end{eqnarray*}
    where, inequality $a$ is due to Jensen's inequality, inequality $b$ follows from the fact that the function $f(t)=\log_2(1+e^{-t})$ is 1-Lipschitz. Inequality $c$ holds with probability at least $1-\delta$ and follows from Theorem 3 of \citep{dasguptatosh2020}.
\end{proof}

\FloatBarrier
\section{Convergence and generalization gap}
\label{app:conv}
\suppressfloats[t]

Beyond test performance, we ask whether a learned \eas representation takes
longer to train than a dense one, and whether it overfits more. Both can be read
from the runs behind \cref{sec:res-vs-relu} without training anything new. For
every configuration we record the number of epochs after which early stopping
ended training (patience $30$, at most $300$ epochs) and the training score of
the model that early stopping kept, each averaged over the three splits; the
generalization gap is the training score minus the test score. We compare each
activation against $\relu$ exactly as in \cref{sec:res-vs-relu}: $\rmM$ and $w$
are learned in both, $k$ is selected on validation and every quantity is read
from the selected configuration, the comparison is paired within each expansion
factor, and significance is a two-sided Wilcoxon signed rank test.
\Cref{fig:conv} shows the median differences for every activation, and
\cref{tab:convcls,tab:convreg} give the numbers for the sparse ones.

\begin{figure}[!ht]
\centering
\includegraphics[width=\linewidth]{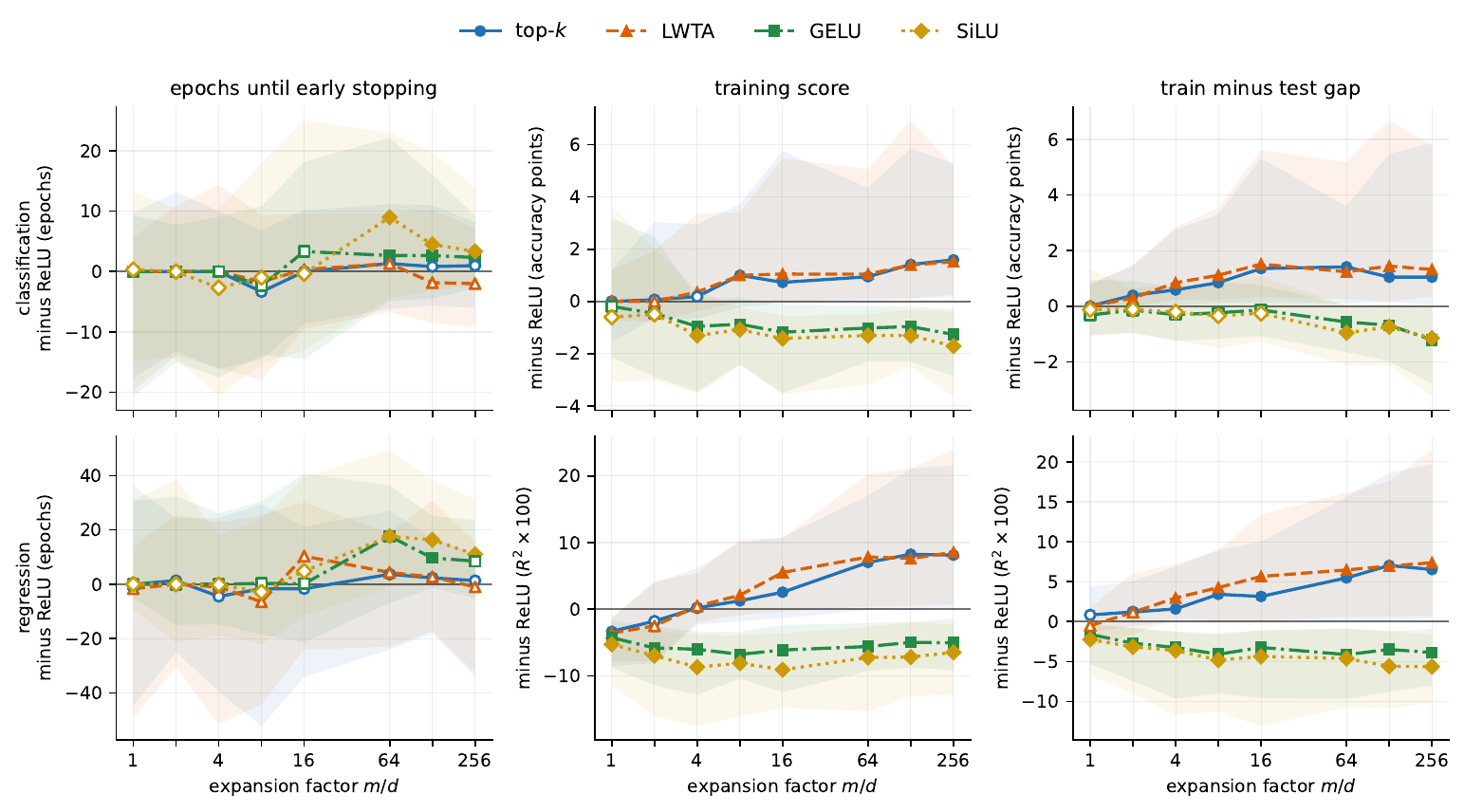}
\caption{Convergence and generalization gap against $\relu$ at equal expanded
dimension, with $\rmM$ and $w$ learned for both methods and $k$ selected on
validation for the sparse activations. [Left] Epochs until early stopping ended
training. [Middle] Training score. [Right] Training minus test score. Top row:
the $73$ classification tasks, in accuracy points; bottom row: the $48$
regression tasks, in $R^2$ multiplied by $100$. Lines are medians over
datasets, ribbons the interquartile range; filled markers are significant
(two-sided Wilcoxon, $p < 0.05$) and hollow ones are not. The test-score
comparison is in \cref{fig:boxes}.}
\label{fig:conv}
\end{figure}

\begin{table}[t]
\caption{Convergence and generalization gap on the $73$ classification tasks: each sparse activation minus $\relu$, with $\rmM$ and $w$ learned in both and $k$ selected on validation. Epochs are counted until early stopping ended training; the training score and the gap (training minus test score) are in accuracy points. Each cell gives the median difference, wins/ties/losses over datasets and the two-sided Wilcoxon $p$-value; a star marks $p<0.05$. The test-score comparison is in \cref{tab:wtlfull}.}
\label{tab:convcls}
\centering
\scriptsize
\setlength{\tabcolsep}{2.5pt}
\begin{tabular}{lcccccccc}
\toprule
comparison & $m/d=1$ & $m/d=2$ & $m/d=4$ & $m/d=8$ & $m/d=16$ & $m/d=64$ & $m/d=128$ & $m/d=256$ \\
\midrule
epochs, top-$k$ & \begin{tabular}[c]{@{}c@{}}$0.0$\\ 18/12/27\\ (.245)\end{tabular} & \begin{tabular}[c]{@{}c@{}}$0.0$\\ 28/11/27\\ (.847)\end{tabular} & \begin{tabular}[c]{@{}c@{}}$0.0$\\ 29/8/34\\ (.479)\end{tabular} & \begin{tabular}[c]{@{}c@{}}$-3.3$\\ 28/4/39\\ (.166)\end{tabular} & \begin{tabular}[c]{@{}c@{}}$0.0$\\ 34/4/33\\ (.788)\end{tabular} & \begin{tabular}[c]{@{}c@{}}$+1.3$\\ 38/1/32\\ (.361)\end{tabular} & \begin{tabular}[c]{@{}c@{}}$+0.8$\\ 37/2/31\\ (.465)\end{tabular} & \begin{tabular}[c]{@{}c@{}}$+1.0$\\ 39/0/28\\ (.277)\end{tabular} \\
\addlinespace[4pt]
epochs, LWTA & \begin{tabular}[c]{@{}c@{}}$0.0$\\ 16/13/28\\ (.163)\end{tabular} & \begin{tabular}[c]{@{}c@{}}$0.0$\\ 29/10/27\\ (.893)\end{tabular} & \begin{tabular}[c]{@{}c@{}}$0.0$\\ 34/5/32\\ (.821)\end{tabular} & \begin{tabular}[c]{@{}c@{}}$-1.7$\\ 31/4/36\\ (.352)\end{tabular} & \begin{tabular}[c]{@{}c@{}}$+0.3$\\ 36/4/31\\ (.923)\end{tabular} & \begin{tabular}[c]{@{}c@{}}$+1.3$\\ 38/4/29\\ (.474)\end{tabular} & \begin{tabular}[c]{@{}c@{}}$-1.8$\\ 31/1/38\\ (.936)\end{tabular} & \begin{tabular}[c]{@{}c@{}}$-2.0$\\ 29/0/38\\ (.500)\end{tabular} \\
\addlinespace[4pt]
training score, top-$k$ & \begin{tabular}[c]{@{}c@{}}$0.00$\\ 23/13/21\\ (.874)\end{tabular} & \begin{tabular}[c]{@{}c@{}}$+0.06$\\ 31/13/22\\ (.105)\end{tabular} & \begin{tabular}[c]{@{}c@{}}$+0.19$\\ 36/9/26\\ (.056)\end{tabular} & \begin{tabular}[c]{@{}c@{}}$+1.00$\\ 40/10/21$^{\ast}$\\ ($<$.001)\end{tabular} & \begin{tabular}[c]{@{}c@{}}$+0.73$\\ 46/9/16$^{\ast}$\\ ($<$.001)\end{tabular} & \begin{tabular}[c]{@{}c@{}}$+0.95$\\ 51/8/12$^{\ast}$\\ ($<$.001)\end{tabular} & \begin{tabular}[c]{@{}c@{}}$+1.41$\\ 52/10/8$^{\ast}$\\ ($<$.001)\end{tabular} & \begin{tabular}[c]{@{}c@{}}$+1.59$\\ 52/7/8$^{\ast}$\\ ($<$.001)\end{tabular} \\
\addlinespace[4pt]
training score, LWTA & \begin{tabular}[c]{@{}c@{}}$0.00$\\ 24/14/19\\ (.555)\end{tabular} & \begin{tabular}[c]{@{}c@{}}$0.00$\\ 31/14/21\\ (.143)\end{tabular} & \begin{tabular}[c]{@{}c@{}}$+0.36$\\ 38/11/22$^{\ast}$\\ (.009)\end{tabular} & \begin{tabular}[c]{@{}c@{}}$+1.00$\\ 47/7/17$^{\ast}$\\ ($<$.001)\end{tabular} & \begin{tabular}[c]{@{}c@{}}$+1.05$\\ 51/11/9$^{\ast}$\\ ($<$.001)\end{tabular} & \begin{tabular}[c]{@{}c@{}}$+1.05$\\ 53/13/5$^{\ast}$\\ ($<$.001)\end{tabular} & \begin{tabular}[c]{@{}c@{}}$+1.40$\\ 53/9/8$^{\ast}$\\ ($<$.001)\end{tabular} & \begin{tabular}[c]{@{}c@{}}$+1.52$\\ 55/9/3$^{\ast}$\\ ($<$.001)\end{tabular} \\
\addlinespace[4pt]
gap, top-$k$ & \begin{tabular}[c]{@{}c@{}}$+0.01$\\ 26/16/15$^{\ast}$\\ (.032)\end{tabular} & \begin{tabular}[c]{@{}c@{}}$+0.39$\\ 42/12/12$^{\ast}$\\ ($<$.001)\end{tabular} & \begin{tabular}[c]{@{}c@{}}$+0.59$\\ 51/10/10$^{\ast}$\\ ($<$.001)\end{tabular} & \begin{tabular}[c]{@{}c@{}}$+0.85$\\ 55/7/9$^{\ast}$\\ ($<$.001)\end{tabular} & \begin{tabular}[c]{@{}c@{}}$+1.36$\\ 53/8/10$^{\ast}$\\ ($<$.001)\end{tabular} & \begin{tabular}[c]{@{}c@{}}$+1.41$\\ 53/7/11$^{\ast}$\\ ($<$.001)\end{tabular} & \begin{tabular}[c]{@{}c@{}}$+1.05$\\ 53/7/10$^{\ast}$\\ ($<$.001)\end{tabular} & \begin{tabular}[c]{@{}c@{}}$+1.05$\\ 50/8/9$^{\ast}$\\ ($<$.001)\end{tabular} \\
\addlinespace[4pt]
gap, LWTA & \begin{tabular}[c]{@{}c@{}}$0.00$\\ 24/14/19\\ (.320)\end{tabular} & \begin{tabular}[c]{@{}c@{}}$+0.29$\\ 41/15/10$^{\ast}$\\ ($<$.001)\end{tabular} & \begin{tabular}[c]{@{}c@{}}$+0.84$\\ 56/6/9$^{\ast}$\\ ($<$.001)\end{tabular} & \begin{tabular}[c]{@{}c@{}}$+1.11$\\ 57/8/6$^{\ast}$\\ ($<$.001)\end{tabular} & \begin{tabular}[c]{@{}c@{}}$+1.52$\\ 58/9/4$^{\ast}$\\ ($<$.001)\end{tabular} & \begin{tabular}[c]{@{}c@{}}$+1.24$\\ 58/8/5$^{\ast}$\\ ($<$.001)\end{tabular} & \begin{tabular}[c]{@{}c@{}}$+1.44$\\ 55/8/7$^{\ast}$\\ ($<$.001)\end{tabular} & \begin{tabular}[c]{@{}c@{}}$+1.32$\\ 54/8/5$^{\ast}$\\ ($<$.001)\end{tabular} \\
\bottomrule
\end{tabular}
\end{table}

\begin{table}[t]
\caption{The comparisons of \cref{tab:convcls} on the $48$ regression tasks, with the training score and the gap in $R^2$, reported raw. The test-score comparison is in \cref{tab:wtlreg}.}
\label{tab:convreg}
\centering
\scriptsize
\setlength{\tabcolsep}{2.5pt}
\begin{tabular}{lcccccccc}
\toprule
comparison & $m/d=1$ & $m/d=2$ & $m/d=4$ & $m/d=8$ & $m/d=16$ & $m/d=64$ & $m/d=128$ & $m/d=256$ \\
\midrule
epochs, top-$k$ & \begin{tabular}[c]{@{}c@{}}$0.0$\\ 15/5/12\\ (.683)\end{tabular} & \begin{tabular}[c]{@{}c@{}}$+1.3$\\ 24/3/16\\ (.984)\end{tabular} & \begin{tabular}[c]{@{}c@{}}$-4.5$\\ 20/1/25\\ (.245)\end{tabular} & \begin{tabular}[c]{@{}c@{}}$-1.7$\\ 22/1/25\\ (.377)\end{tabular} & \begin{tabular}[c]{@{}c@{}}$-1.7$\\ 22/0/26\\ (.335)\end{tabular} & \begin{tabular}[c]{@{}c@{}}$+3.7$\\ 28/0/19\\ (.615)\end{tabular} & \begin{tabular}[c]{@{}c@{}}$+2.3$\\ 27/0/20\\ (.676)\end{tabular} & \begin{tabular}[c]{@{}c@{}}$+1.3$\\ 24/0/22\\ (.611)\end{tabular} \\
\addlinespace[4pt]
epochs, LWTA & \begin{tabular}[c]{@{}c@{}}$-1.7$\\ 10/5/17\\ (.100)\end{tabular} & \begin{tabular}[c]{@{}c@{}}$0.0$\\ 20/3/20\\ (.614)\end{tabular} & \begin{tabular}[c]{@{}c@{}}$-1.0$\\ 20/2/24\\ (.246)\end{tabular} & \begin{tabular}[c]{@{}c@{}}$-6.5$\\ 21/0/27\\ (.343)\end{tabular} & \begin{tabular}[c]{@{}c@{}}$+10.3$\\ 28/0/20\\ (.678)\end{tabular} & \begin{tabular}[c]{@{}c@{}}$+4.3$\\ 25/0/22\\ (.800)\end{tabular} & \begin{tabular}[c]{@{}c@{}}$+2.7$\\ 27/0/20\\ (.804)\end{tabular} & \begin{tabular}[c]{@{}c@{}}$-1.0$\\ 22/0/24\\ (.523)\end{tabular} \\
\addlinespace[4pt]
training score, top-$k$ & \begin{tabular}[c]{@{}c@{}}$-0.033$\\ 7/0/25$^{\ast}$\\ (.002)\end{tabular} & \begin{tabular}[c]{@{}c@{}}$-0.018$\\ 15/0/28\\ (.136)\end{tabular} & \begin{tabular}[c]{@{}c@{}}$+0.002$\\ 23/0/23\\ (.315)\end{tabular} & \begin{tabular}[c]{@{}c@{}}$+0.013$\\ 28/1/19$^{\ast}$\\ (.019)\end{tabular} & \begin{tabular}[c]{@{}c@{}}$+0.026$\\ 29/1/18$^{\ast}$\\ (.004)\end{tabular} & \begin{tabular}[c]{@{}c@{}}$+0.070$\\ 34/0/13$^{\ast}$\\ ($<$.001)\end{tabular} & \begin{tabular}[c]{@{}c@{}}$+0.083$\\ 33/2/12$^{\ast}$\\ ($<$.001)\end{tabular} & \begin{tabular}[c]{@{}c@{}}$+0.081$\\ 36/1/9$^{\ast}$\\ ($<$.001)\end{tabular} \\
\addlinespace[4pt]
training score, LWTA & \begin{tabular}[c]{@{}c@{}}$-0.036$\\ 7/0/25$^{\ast}$\\ ($<$.001)\end{tabular} & \begin{tabular}[c]{@{}c@{}}$-0.025$\\ 16/0/27\\ (.160)\end{tabular} & \begin{tabular}[c]{@{}c@{}}$+0.005$\\ 23/2/21\\ (.169)\end{tabular} & \begin{tabular}[c]{@{}c@{}}$+0.021$\\ 28/1/19$^{\ast}$\\ (.006)\end{tabular} & \begin{tabular}[c]{@{}c@{}}$+0.055$\\ 31/3/14$^{\ast}$\\ ($<$.001)\end{tabular} & \begin{tabular}[c]{@{}c@{}}$+0.078$\\ 36/1/10$^{\ast}$\\ ($<$.001)\end{tabular} & \begin{tabular}[c]{@{}c@{}}$+0.076$\\ 37/1/9$^{\ast}$\\ ($<$.001)\end{tabular} & \begin{tabular}[c]{@{}c@{}}$+0.085$\\ 36/0/10$^{\ast}$\\ ($<$.001)\end{tabular} \\
\addlinespace[4pt]
gap, top-$k$ & \begin{tabular}[c]{@{}c@{}}$+0.008$\\ 18/1/13\\ (.472)\end{tabular} & \begin{tabular}[c]{@{}c@{}}$+0.012$\\ 29/0/14\\ (.067)\end{tabular} & \begin{tabular}[c]{@{}c@{}}$+0.016$\\ 32/3/11$^{\ast}$\\ (.001)\end{tabular} & \begin{tabular}[c]{@{}c@{}}$+0.034$\\ 33/2/13$^{\ast}$\\ ($<$.001)\end{tabular} & \begin{tabular}[c]{@{}c@{}}$+0.032$\\ 35/1/12$^{\ast}$\\ ($<$.001)\end{tabular} & \begin{tabular}[c]{@{}c@{}}$+0.055$\\ 34/4/9$^{\ast}$\\ ($<$.001)\end{tabular} & \begin{tabular}[c]{@{}c@{}}$+0.070$\\ 35/2/10$^{\ast}$\\ ($<$.001)\end{tabular} & \begin{tabular}[c]{@{}c@{}}$+0.065$\\ 36/1/9$^{\ast}$\\ ($<$.001)\end{tabular} \\
\addlinespace[4pt]
gap, LWTA & \begin{tabular}[c]{@{}c@{}}$-0.005$\\ 13/1/18\\ (.438)\end{tabular} & \begin{tabular}[c]{@{}c@{}}$+0.012$\\ 28/1/14\\ (.104)\end{tabular} & \begin{tabular}[c]{@{}c@{}}$+0.030$\\ 36/1/9$^{\ast}$\\ ($<$.001)\end{tabular} & \begin{tabular}[c]{@{}c@{}}$+0.043$\\ 35/3/10$^{\ast}$\\ ($<$.001)\end{tabular} & \begin{tabular}[c]{@{}c@{}}$+0.057$\\ 37/4/7$^{\ast}$\\ ($<$.001)\end{tabular} & \begin{tabular}[c]{@{}c@{}}$+0.065$\\ 37/2/8$^{\ast}$\\ ($<$.001)\end{tabular} & \begin{tabular}[c]{@{}c@{}}$+0.070$\\ 39/1/7$^{\ast}$\\ ($<$.001)\end{tabular} & \begin{tabular}[c]{@{}c@{}}$+0.074$\\ 38/0/8$^{\ast}$\\ ($<$.001)\end{tabular} \\
\bottomrule
\end{tabular}
\end{table}

\paragraph{Convergence.} The sparse activations do not take longer to train.
The number of epochs until early stopping is not significantly different from
$\relu$ at any expansion factor, on classification or on regression, for
$\topk$ ($p \geq 0.166$) or for $\lwta$ ($p \geq 0.100$). The medians stay
within $3.3$ epochs of $\relu$ on classification and within $4.5$ ($\topk$)
and $10.3$ ($\lwta$) on regression, against median training lengths of $57$ and
$106$ epochs. It is the smooth dense activations that train longer: from
$m = 64d$, \texttt{GELU} and \texttt{SiLU} stop a median $2.3$ to $9.0$ epochs
later than $\relu$ on classification and $8.5$ to $17.7$ epochs later on
regression, significantly at every one of these expansion factors except
\texttt{GELU} on regression at $m = 256d$ ($p = 0.080$). Two remarks qualify
this. First, epochs measure optimization progress rather than wall-clock time,
which depends on how the selection of the $k$ largest units is implemented
(\cref{app:sparse-advantages}). Second, the $300$-epoch cap stopped a few runs
before early stopping did (on all three splits in $2$ of $544$ classification
cells for $\topk$ and $4$ of $567$ for $\relu$, and in $14$ of $357$ and $42$ of
$380$ regression cells); the cap truncates $\relu$ more often, so it does not
make the sparse activations look faster than they are.

\paragraph{Generalization gap.} The sparse activations do have a larger gap
between training and test score than $\relu$. For $\topk$ the difference is
significant at every expansion factor on classification and from $m = 4d$ on
regression, and it is larger at wider expansions: a median of $0.85$ to $1.41$
accuracy points from $m = 8d$, and $0.055$ to $0.070$ in $R^2$ from $m = 64d$.
$\lwta$ is the same (significant at $7$ and $6$ of the $8$ expansion factors).
The larger gap comes from the training score, not from the test score:
from $m = 8d$, $\topk$ reaches a significantly higher training score than
$\relu$ on both tasks (by a median $0.73$ to $1.59$ accuracy points and $0.013$
to $0.083$ in $R^2$), while its test score is not significantly different at any
expansion factor on classification and from $m = 4d$ on regression
(\cref{sec:res-vs-relu}). Where $\topk$ does trail $\relu$ on regression, at
$m = d$ and $m = 2d$, its training score is lower too (by a median $0.033$ and
$0.018$ in $R^2$, significantly at $m = d$, $p = 0.002$), so there the sparse
representation underfits rather than overfits. The smooth
activations go the other way: \texttt{GELU} and \texttt{SiLU} reach a lower
training score than $\relu$ (significantly at $6$ of the $8$ expansion factors
on classification and at all $8$ on regression) and have a smaller gap (from
$m = 64d$ on classification and at every expansion factor on regression),
while their test score is also lower.

In summary, learning an \eas representation by gradient descent is not slower
than learning a dense $\relu$ representation of the same size, and the learned
\eas representation fits the training data more closely without losing test
performance (from $m = 4d$ on regression). Its gap is therefore larger, not
smaller: the additional fit to the training data does not carry over to the
test data. As in \cref{app:tuning}, $k$ is
selected for the sparse activations while $\relu$ has no comparable choice. That
selection, like early stopping, uses the validation score and never the training
score, so a higher training score is not what it optimizes for: the values of
$k$ chosen for their validation score also turn out to fit the training data
more closely than $\relu$.

\end{document}